    \titleformat{\title}{\large\bfseries}{}{}{}
    \titleformat{\section}{\normalfont\bfseries}{\thesection}{0.5em}{}
    \titleformat{\subsection}{\normalfont\it}{\thesubsection}{0.5em}{}
    \titleformat{\subsubsection}{\normalfont\normalsize\it}{\thesubsubsection}{0.5em}{}
    \titleformat{\paragraph}[runin]{\normalfont\bfseries}{\theparagraph}{0.5em}{}
    \titleformat{\subparagraph}[runin]{\normalfont\normalsize\it}{\thesubparagraph}{0.5em}{}
	\renewcommand{\cite}[1]{\citep{#1}}
\DeclareMathOperator*{\argmin}{arg\,min}
\providecommand*{\cupdot}{%
	\mathbin{%
		\mathpalette\@cupdot{}%
	}%
}
\newcommand*{\@cupdot}[2]{%
	\ooalign{%
		$\m@th#1\cup$\cr
		\hidewidth$\m@th#1\cdot$\hidewidth
	}%
}
\newtheorem{theorem}{Theorem}
\newtheorem{corollary}{Corollary}
\newtheorem{lemma}{Lemma}
\newtheorem{definition}{Definition}
\newtheorem{remark}{Remark}
\begin{document}

\author{David Stein and Bjoern Andres$^1$}
\title{\bf Inapproximability of a Pair of Forms Defining a\\Partial Boolean Function}
\date{TU Dresden}

\twocolumn[
\begin{@twocolumnfalse}
\maketitle
\begin{abstract}
We consider the problem of jointly minimizing forms of two Boolean functions $f, g \colon \{0,1\}^J \to \{0,1\}$ such that $f + g \leq 1$ and so as to separate disjoint sets $A \cupdot B \subseteq \{0,1\}^J$ such that $f(A) = \{1\}$ and $g(B) = \{1\}$.
We hypothesize that this problem is easier to solve or approximate than the well-understood problem of minimizing the form of one Boolean function $h: \{0,1\}^J \to \{0,1\}$ such that $h(A) = \{1\}$ and $h(B) = \{0\}$.
For a large class of forms, including binary decision trees and ordered binary decision diagrams, we refute this hypothesis.
For disjunctive normal forms, we show that the problem is at least as hard as \textsc{min-set-cover}.
For all these forms, we establish that no $o(\ln (|A| + |B| -1))$-approximation algorithm exists unless $\textsc{p=np}$.
\end{abstract}

\vspace{7ex}
\end{@twocolumnfalse}
]

\maketitle

\footnotetext[1]{Correspondence: \href{mailto:bjoern.andres@tu-dresden.de}{\texttt{bjoern.andres@tu-dresden.de}}}

\section{Introduction}
The desire to apply machine learning in safety-critical environments has renewed interest in the learning of partial functions.
In medicine, for instance, doctors may require a partial function to distinguish between positive findings (1), negative findings (0) and findings prioritized for human inspection (-).
In the field of autonomous driving, engineers may require a partial function to distinguish between autonomous driving mode (1), emergency breaking (0) and escalation to the driver (-).
In this article, we contribute to the understanding of the hardness of learning partial functions.

Specifically, we concentrate on partial Boolean functions.
While a (total) Boolean function $f \colon \{0,1\}^J \to \{0,1\}$ defines a decision, $f(x) \in \{0,1\}$ for every assignment $x \in \{0,1\}^J$ of zeroes or ones to the finite, non-empty set $J$ of input variables, a partial Boolean function, i.e.~a map from a subset of $\{0,1\}^J$ to $\{0,1\}$, distinguishes between positive, negative and undecided inputs $x$ by $f(x)$ being either 1, 0 or undefined.
By the examples above, we have seen that such distinctions are relevant in safety-critical environments.

Our work is motivated by the hypothesis that the problem of learning a partial Boolean function is easier to solve or approximate than the problem of learning a (total) Boolean function.
Intuition might lead us to speculate that the hypothesis is true because there is more freedom in choosing a partial Boolean function than there is in choosing a (total) Boolean function.
Anyhow, we understand that the trueness of the hypothesis can depend on the encoding of partial Boolean functions as well as on the learning problem.

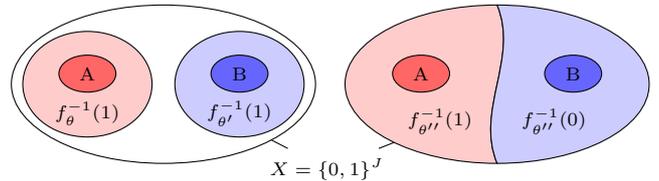
\begin{figure}[t]
\centering
\begin{tikzpicture}[xscale=0.5, yscale=0.7]
\tikzstyle{every node} = [font=\scriptsize]
\node[fill=white] at (4.3, -1.6) (n) {$X = \{0,1\}^J$};
\draw (0, 0) -- (n);
\draw (10, 0) -- (n);
\begin{scope}
	\draw[fill=white] (0, 0) ellipse (4 and 1.5);
	
	\draw[fill=red!20] (-2, 0) ellipse (1.7 and 1.0);
	\draw[fill=red!60] (-2, 0.2) ellipse (0.75 and 0.35);
	\node at (-2, 0.2) {A};
	\node at (-2, -0.55) {$f_\theta^{-1}(1)$};
	
	\draw[fill=blue!20] (2, 0) ellipse (1.7 and 1.0);
	\draw[fill=blue!60] (2, 0.2) ellipse (0.75 and 0.35);
	\node at (2, 0.2) {B};
	\node at (2, -0.55) {$f_{\theta'}^{-1}(1)$};
\end{scope}
\begin{scope}[xshift=58ex]
	\draw[fill=red!20] (90:4 and 1.5) arc(90:270:4 and 1.5) to[out=120, in=-60] (0, 1.5);
	\draw[fill=blue!20] (270:4 and 1.5) arc(270:450:4 and 1.5) to[out=-60, in=120] (0, -1.5);
	
	\draw[fill=red!60] (-2, 0.2) ellipse (0.75 and 0.35);
	\draw[fill=blue!60] (2, 0.2) ellipse (0.75 and 0.35);
	
	\node at (-1.5, -0.7) {$f_{\theta''}^{-1}(1)$};
	\node at (1.5, -0.7) {$f_{\theta''}^{-1}(0)$};
	\node at (-2, 0.2) {A};
	\node at (2, 0.2) {B};
\end{scope}
\end{tikzpicture}
\caption{We study the problem (depicted on the left) of jointly minimizing forms $\theta, \theta'$ of Boolean functions $f_\theta, f_{\theta'} \colon \{0,1\}^J \to \{0,1\}$ such that $f_\theta + f_{\theta'} \leq 1$ and so as to separate disjoint sets $A \cup B \subseteq \{0,1\}^J$ such that $f_\theta(A) = \{1\}$ and $f_{\theta'}(B) = \{1\}$.
Note that $f_\theta$ and $f_{\theta'}$ together define a partial Boolean function that assumes the value one (zero) iff $f_\theta$ ($f_{\theta'}$) assumes the value one.
Contrast this problem with the well-known problem (depicted on the right) of separating $A$ and $B$ by a single Boolean function $f_{\theta''}$ such that $f_{\theta''}(A) = \{1\}$ and $f_{\theta''}(B) = \{0\}$.}
\label{fig:problemsetvis}
\end{figure}

Here, we choose to encode any partial Boolean function by a pair $(f,g)$ of Boolean functions $f,g \colon \{0,1\}^J \to \{0,1\}$ such that $f + g \leq 1$.
Together, $f$ and $g$ define the unique partial Boolean function that assumes the value one iff $f$ assumes the value one, and assumes the value zero iff $g$ assumes the value one.
See also Fig.~\ref{fig:problemsetvis}.
More specifically, we choose to encode $f$ and $g$ both by the same type of form, e.g.~a binary decision tree, ordered binary decision diagram (OBDD) or disjunctive normal form (DNF).
As a learning problem, we consider the objective of minimizing a sum of complexities of these forms, subject to the additional constraint that disjoint sets $A \cupdot B = S \subseteq \{0,1\}^J$ are separated such that $f(A) = \{1\}$ and $g(B) = \{1\}$.
This problem is depicted also in Fig.~\ref{fig:problemsetvis}.

We show:
For a large class of forms including binary decision trees and OBDDs, this problem is at least as hard as the well-understood problem, also depicted in Fig.~\ref{fig:problemsetvis}, of minimizing the form of one Boolean function $h: \{0,1\}^J \to \{0,1\}$ such that $h(A) = \{1\}$ and $h(B) = \{0\}$.
For the class of DNFs, the problem we introduce is at least as hard as \textsc{min-set-cover}.
For binary decision trees, OBDDs and DNFs, no $o(\ln (|A| + |B| -1))$-approximation algorithm exists unless $\textsc{p=np}$.

\section{Related Work}
The problem of extending a partial Boolean function defined by a set of true points $A \subseteq\{0, 1\}^J$ and a disjoint set of false points $B \subseteq \{0, 1\}^J$ to a total function $h: \{0, 1\}^J \to \{0, 1\}$ such that $A \subseteq h^{-1}(1)$ and $B \subseteq h^{-1}(0)$ has been studied comprehensively, for various classes of functions~\cite{Crama2011}. 
In particular, deciding whether a binary decision tree or DNF or OBDD of bounded complexity exists which classifies the set of truth points and false points exactly is \textsc{np}-complete \cite{Lukas1999,Hancock1996,sauerhoff-1996,Haussler1988}. 
The problem of finding a DNF of bounded length remains \textsc{np}-hard even if the full truth table is given as input~\cite{Allender2008,Lukas1999}. 

Toward approximation, the problem of finding a DNF of minimum depth or length, consistent with labeled data, does not admit a polynomial-time $o(\ln (|A| + |B| - 1))$-approximation algorithm unless
\textsc{p=np}, due to an approximation-preserving reduction of \textsc{min-set-cover} by \citet{Lukas1999,Feige1998} and the inapproximability of \textsc{min-set-cover} established by \citet{irit-2014}.
See also \citet{moshkovitz-2015}.

Analogously, the problem of finding a binary decision tree of minimum depth or number of nodes, consistent with labeled data, does not admit a polynomial-time $o(\ln (|A| + |B| - 1))$-approximation algorithm unless $\textsc{p=np}$, by the approximation-preserving reduction of \textsc{min-set-cover} by \citet{Hancock1996} and the inapproximability of \textsc{min-set-cover} due to \citet{irit-2014}.

Analogously still, the problem of finding an OBDD with a minimum number of interior nodes, consistent with labeled data \citep{Hirata1996,sauerhoff-1996,takenaga-2000}, does not admit a polynomial-time $o(\ln (|A| + |B| - 1))$-approximation algorithm unless $\textsc{p=np}$, by the approximation-preserving reduction of \textsc{min-set-cover} by \citet{Hirata1996} and the inapproximability of \textsc{min-set-cover} due to \citet{irit-2014}. 

The related problem of isolating points by binary decision trees is \textsc{np}-hard \cite{Hyafil1976}. 
It does not admit a polynomial time $o(\ln (|A| + |B| - 1))$-approximation algorithm unless $\textsc{p=np}$ by the approximation-preserving reduction of \textsc{min-set-cover} by \citet{Laber2004} and the inapproximability of \textsc{min-set-cover} due to \citet{irit-2014}.

The related problem of deciding, for any binary decision tree given as input, whether an equivalent binary decision tree of size at most $k\in \mathbb{N}$ exists is \textsc{np}-complete~\cite{ZANTEMA2000}. 
The corresponding optimization problem of finding an equivalent binary decision tree of minimal size does not admit a polynomial-time $r$-approximation algorithm for any constant $r>1$, unless \textsc{$\textsc{p=np}$} \cite{Sieling2008}.

The related problem of deciding, for any OBBD given as input, whether an equivalent OBDD of size at most $k\in \mathbb{N}$ exists is \textsc{np}-complete~\cite{Bollig1996}. 
The corresponding optimization problem of finding an equivalent OBDD of minimal size does not admit a polynomial-time $r$-approximation algorithm for any constant $r> 1$, unless $\textsc{p=np}$~\cite{Sieling2002}.

\section{Problem Statement}
The problems we study in this article are about learning from data with binary features and binary labels.
For conciseness, we call a tuple $(J, X, A, B)$ \emph{Boolean labeled data} with the \emph{feature space} $X$ iff $J \neq \emptyset$ is finite and $X = \{0,1\}^J$ and $A \cup B \subseteq X$ and $A \neq \emptyset$ and $B \neq \emptyset$ and $A \cap B = \emptyset$.
For any $x \in A$, we say that $x$ is \emph{labeled} $A$.
For any $x \in B$, we say that $x$ is \emph{labeled} $B$.
For all $x \in X \setminus (A \cup B)$, we say that $x$ is \emph{unlabeled}.
We study the following problems:

\begin{definition}
\label{definition:problems}
For any Boolean labeled data $(J, X, A, B) = D$,
any non-empty family $f \colon \Theta \to \{0, 1\}^X$ of Boolean functions and 
any $R \colon \Theta \to \mathbb{N}_0$ called a \emph{regularizer},
the instance of \textsc{partial-separation} wrt.~$D$, $\Theta$, $f$ and $R$ has the form
\begin{align}
\min_{(\theta, \theta')\in \Theta^2} \quad & R(\theta) + R(\theta') & \\
\textnormal{subj.~to} \quad 
&\forall x \in A \colon \ \ f_{\theta}(x) = 1 & \hspace{-3.5ex}\textnormal{(exactness)}\\
&\forall x \in B \colon \ \ f_{\theta'}(x) = 1 & \hspace{-3.5ex}\textnormal{(exactness)}\\
&f_{\theta} + f_{\theta'} \leq 1 & \hspace{-3.5ex} \hspace{-2ex}\textnormal{(non-contradictoriness)}
\end{align}

For any additional $m \in \mathbb{N}_0$, the instance of \textsc{partial-separability} wrt.~$D$, $\Theta$, $f$, $R$ and $m$ is to decide whether there exist $\theta, \theta' \in \Theta$ such that the following conditions hold:
\begin{align}
R(\theta) + R(\theta') & \leq m & \textnormal{(boundedness)}\\
\forall x \in A \colon \quad f_\theta(x) & = 1 & \textnormal{(exactness)}\\
\forall x \in B \colon \quad f_{\theta'}(x) & = 1 & \textnormal{(exactness)}\\
f_\theta + f_{\theta'} & \leq 1 & \textnormal{(non-contradictoriness)}
\end{align}
\end{definition}

\begin{remark}
\begin{enumerate}[label=\alph*)]
\item 
Any feasible solution $(\theta, \theta')\in \Theta^2$ defines a partial Boolean function $h$ from $X$ to $\{0,1\}$ with the domain $X' = f^{-1}_\theta(1) \cup f^{-1}_{\theta'}(1)$ and such that for all $x \in X'$, we have $h(x) = 1$ iff $f_\theta(x) = 1$ and, equivalently, $h(x) = 0$ iff $f_{\theta'}(x) = 1$.
\item 
Exactness means the labeled data is classified totally and without errors, i.e., for all $x\in A$ we have $x\in X'$ and $h(x) = 1$, and for all $x\in B$ we have $x\in X'$ and $h(x) = 0$.
\item 
The problems are symmetric in the sense that $(\theta, \theta') \in \Theta^2$ is a (feasible) solution to an instance wrt.~Boolean labeled data $(J, X, A, B)$ iff $(\theta', \theta)$ is a (feasible) solution to the same instance but with Boolean labeled data labeled data $(J, X, B, A)$.
\item 
Totality can be enforced by the additional constraint 
\begin{align}
1 \leq f_\theta + f_{\theta'}
\enspace .
\end{align}
In this sense, the problem of learning partial Boolean functions is a relaxation of the problem of learning (total) Boolean functions. 
\end{enumerate}
\end{remark}

Contrast Def.~\ref{definition:problems} with the well-understood problems (Def.~\ref{definition:problems-classic} below) of separating two sets, $A$ and $B$, by a single Boolean function $f_\theta$ such that $f_\theta(A) = \{1\}$ and $f_\theta(B) = \{0\}$.
The difference between Def.~\ref{definition:problems} and Def.~\ref{definition:problems-classic} is depicted in Fig.~\ref{fig:problemsetvis}.

\begin{definition}[e.g.~\citet{Haussler1988,Hancock1996,sauerhoff-1996,Feige1998,Lukas1999}]
	\label{definition:problems-classic}
	Let $D = (J, X, A, B)$ be Boolean labeled data.
	Let $f \colon \Theta \to \{0, 1\}^X$ be a non-empty family of Boolean functions, and 
	let $R \colon \Theta \to \mathbb{N}_0$ be called a \emph{regularizer}.
	The instance of \textsc{separation} wrt.~$D$, $\Theta$, $f$ and $R$ has the form
	\begin{align}
		\min_{\theta\in \Theta} \quad & R(\theta) & \\
		\textnormal{subject to} \quad & 
		\forall x \in A \colon \quad f_{\theta}(x) = 1 \\
		& \forall x \in B \colon \quad f_{\theta}(x) = 0 
	\end{align}
	
	Let $m\in \mathbb{N}_0$ in addition. The instance of \textsc{separability} wrt.~$D$, $\Theta$, $f$, $R$ and $m$ is to decide whether there exists a $\theta \in \Theta$ such that the following conditions hold:
	\begin{align}
		R(\theta) & \leq m \\
		\forall x \in A \colon \quad f_\theta(x) & = 1 \\
		\forall x \in B \colon \quad f_\theta(x) & = 0
	\end{align}
\end{definition}

Our work is motivated by the hypothesis that the problems according to Def.~\ref{definition:problems} are easier to solve or approximate than the problems according to Def.~\ref{definition:problems-classic}.
This hypothesis is non-trivial in light of Remark~1d.

\section{Preliminaries}
\subsection{Set Cover Problem}

\begin{definition}
	For any finite, non-empty set $U$, any collection $\Sigma \subseteq 2^{U}$ of subsets of $U$ and any $m\in \mathbb{N}_0$, the instance of \textsc{set-cover} w.r.t.~$U$, $\Sigma$ and $m$ is to decide whether there exists a $\Sigma'\subseteq \Sigma$ such that $\bigcup_{\sigma\in \Sigma'}\sigma = U$ and
	$\vert\Sigma'\vert \leq m$.
	The instance of \textsc{min-set-cover} w.r.t.~$U$ and $\Sigma$ has the form
	\begin{align}
	\min_{\Sigma'\subseteq \Sigma} \quad&\vert \Sigma'\vert \\
	\textnormal{subject to}\quad& U = \bigcup_{\sigma\in \Sigma'} \sigma\enspace.
	\end{align} 
\end{definition}

\begin{theorem}[\citet{irit-2014}]
	\label{theorem:steurersetcoverbound}
	For every $\epsilon > 0$, it is \textsc{np}-hard to approximate \textsc{min-set-cover} to within $(1-\epsilon)\ln |U|$, where $n$ is the size of the instance.
\end{theorem}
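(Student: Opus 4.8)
The plan is to obtain the bound by a gap-amplifying reduction from a PCP-hard problem to \textsc{min-set-cover}, following the template of Feige and its sharpening by Dinur and Steurer; reproducing the full PCP machinery is out of scope, so at the end I would invoke the hardest ingredient as a black box. The overall target is a polynomial-time reduction that maps a decidable-but-NP-hard promise problem to \textsc{set-cover} instances in which ``yes'' instances admit a cover of size $N$ and ``no'' instances require covers of size at least $(1-\epsilon)(\ln |U|)\cdot N$, which immediately rules out a polynomial-time $(1-\epsilon)\ln|U|$-approximation unless \textsc{p}=\textsc{np}.

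First I would fix the starting point: a \textsc{Label-Cover} instance (equivalently a two-prover one-round projection game, obtainable from the PCP theorem plus parallel repetition, or from a gap version of \textsc{Max-3Sat}) with near-perfect completeness and soundness $\delta$, where $\delta$ and the label-set size are tunable within polynomial size. Next I would construct the set-cover instance from a \emph{partition system}: a ground set $C$ of size $h$ carrying $k$ partitions of $C$ into $d$ blocks each, chosen so that any subfamily of blocks covering $C$ that uses at most one block from each partition has size at least $(1-o_d(1))\ln h$. The universe of the reduction is the disjoint union over the constraints $e$ of one copy $C_e$ of $C$; for each vertex $v$ of the game and each label $a$ for $v$ I create one set $S_{v,a}$ which, for every constraint $e$ incident to $v$, contains the block of the partition of $C_e$ indexed by the projection of $a$ across $e$, set up so that the labels at the two endpoints of a \emph{satisfied} constraint point to complementary blocks and hence jointly cover $C_e$.

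The analysis then has two halves. For completeness, a total satisfying labeling yields a cover consisting of exactly one set per vertex, covering every $C_e$ because the two incident labels agree, for a total of $N$ sets. For soundness, if every labeling satisfies at most a $\delta$ fraction of constraints, then on almost every copy $C_e$ the selected sets behave as a ``one block per partition'' family that is not consistent with any single labeling, so the partition-system property forces $\Omega(\ln h)$ blocks per such copy, pushing the total cover size to at least $(1-\epsilon)(\ln h)\cdot N$; choosing parameters so that $\ln h$ matches $\ln|U|$ up to lower-order terms gives the claimed factor.

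The main obstacle is pinning the constant to exactly $1$ while keeping the conclusion under $\textsc{p}\neq\textsc{np}$ rather than a weaker hypothesis. This needs (i) an explicit, derandomized partition system whose locked-cover bound is $(1-o(1))\ln h$ as $d\to\infty$ so that the reduction stays polynomial, and (ii) a starting PCP whose soundness error decays fast enough relative to its size that $h$ can be driven up to $|U|^{1-o(1)}$ — the delicate trade-off that Dinur and Steurer achieve. I would isolate these two facts as the only inputs I do not prove from scratch, and assemble the rest of the reduction and the completeness/soundness calculation around them.
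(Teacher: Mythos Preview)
The paper does not prove this theorem. It appears in the Preliminaries section and is simply quoted from \citet{irit-2014} as a known result, to be used later as a black box when transferring the inapproximability bound to \textsc{partial-separation}. So there is no ``paper's own proof'' to compare your proposal against.

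That said, your sketch is a fair high-level outline of the Feige/Dinur--Steurer strategy: start from a projection PCP, compose with a partition system to build the set-cover universe, and argue completeness (one set per vertex covers every copy of the partition system) versus soundness (on unsatisfied constraints the partition-system lower bound forces $\Omega(\ln h)$ blocks). You also correctly identify the two delicate ingredients you would need to import --- an explicit near-optimal partition system and, crucially, the Dinur--Steurer PCP whose soundness decays fast enough to push $h$ up to $|U|^{1-o(1)}$ while keeping the reduction polynomial and the hypothesis at $\textsc{p}\neq\textsc{np}$ (rather than Feige's stronger assumption). Since you explicitly black-box exactly these points, your ``proof'' ultimately reduces to citing the same theorem the paper cites; there is no independent argument here beyond the packaging, which is fine given that the paper itself does not attempt one.
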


\subsection{Disjunctive Normal Forms}

\begin{definition}
	\label{def:dnfs}
	For any finite, non-empty set $X = \{0, 1\}^J$, 
	for the sets $\Gamma = \left\{\left(V, \bar{V}\right)\in 2^J\times 2^J\vert V\cap \bar{V} = \emptyset\right\}$
	and
	$\Theta = 2^{\Gamma}$,
	the family $f: \Theta \to \{0, 1\}^X$ such that for any $\theta \in \Theta$ and any $x \in X$,
	\begin{equation}
	f_{\theta}(x) = \sum_{(J_0, J_1)\in \theta} \prod_{j\in J_0} x_j \prod_{j\in J_1} \left(1-x_j\right) 
	\end{equation}
	is called the family of $J$-variate \emph{disjunctive normal forms (DNFs)}.
	For $R_l,R_d\colon \Theta \to \mathbb{N}_0$ such that for all $\theta\in \Theta$,
	\begin{align}
	R_l(\theta) &= \sum_{(J_0, J_1)\in \theta} \left(\vert J_0\vert + \vert J_1\vert\right) \\
	R_d(\theta) &= \max_{(J_0, J_1) \in \theta}\left(\vert J_0\vert + \vert J_1\vert\right)
	\end{align}
	$R_l(\theta)$ and $R_d(\theta)$ are called the \emph{length} and \emph{depth}, respectively, of the DNF defined by $\theta$.
\end{definition}

%

\section{Hardness}
\subsection{Polynomially Negatable Functions}

In this section, we show for a large class of families of functions, including binary decision trees and OBDDs, and for typical regularizers, that \textsc{partial-separability} is at least as hard as \textsc{separability}.
For these, it follows from the \textsc{np}-hardness of \textsc{separability} \citep{Haussler1988,takenaga-2000} that \textsc{partial-separability} is also \textsc{np}-hard and, in fact, \textsc{np}-complete.
Beyond the reduction of \textsc{separability} to \textsc{partial-separability}, we already relate feasible solutions as shown in Fig.~\ref{fig:reduction-polynomially-negatable-mapping} in a way that will allow us in the next section to transfer an inapproximability bound from \textsc{separation} to \textsc{partial-separation}.

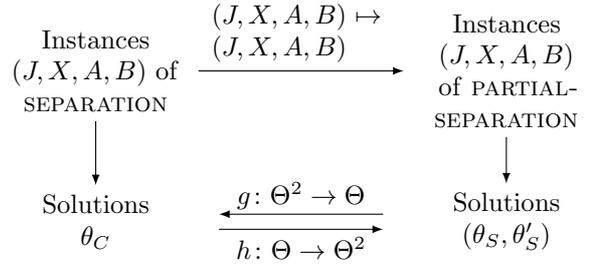
\begin{figure}[t]
	\hspace{-3ex}
	\begin{tikzpicture}[xscale=2.7,yscale=-1]
		\node[text width=18ex, align=center] at (0, 0) (icov) {Instances $(J, X, A, B)$ of \textsc{separation}};
		\node[text width=15ex, align=center] at (2, 0) (isep) {Instances $(J, X, A, B)$ of \textsc{partial-separation}};
		\node[text width=21ex, align=center] at (0, 2) (scov) {Solutions\\$\theta_C$};
		\node[text width=23ex, align=center] at (2, 2) (ssep) {Solutions\\$(\theta_S,\theta_S')$};
		\draw[-latex] (0.5, 0) -- (1.5, 0);
		\node[text width=15ex] at (1, -0.5) {$(J, X, A, B) \mapsto (J, X, A, B)$};
		\draw[-latex] (1.4, 1.9) -- (0.6, 1.9);
		\node at (1, 1.65) {$g \colon \Theta^2 \to \Theta$};
		\draw[-latex] (0.6, 2.1) -- (1.4, 2.1);
		\node at (1, 2.35) {$h \colon \Theta \to \Theta^2$};
		\draw[-latex] (icov) -- (scov);
		\draw[-latex] (isep) -- (ssep);
	\end{tikzpicture}
	\caption{For polynomially negatable functions, we reduce \textsc{separation} to \textsc{partial-separation} in an approximation preserving manner by mapping instances of \textsc{separation} to instances of \textsc{partial-separation} and by relating solutions to these instances by the functions $g$ and $h$.}
	\label{fig:reduction-polynomially-negatable-mapping}
\end{figure}

\begin{definition}
	\label{def:simplenegation}
	Let $J$ finite and non-empty and $X = \{0, 1\}^J$.
	Let $f: \Theta \to \{0, 1\}^X$ non-empty, and
	let $R: \Theta \to \mathbb{N}_0$.
	We call the family $f$ \emph{polynomially negatable under $R$} iff there is a function $n: \Theta \to \Theta$, computable in polynomial time, such that the following conditions hold:
	\begin{align}
		\forall \theta \in \Theta \colon \quad &f_{n(\theta)} + f_\theta = 1 \label{eq:negation}\\
		\forall \theta \in \Theta \colon \quad & R(n(\theta)) \leq R(\theta)
		\label{eq:negation-bound}
	\end{align}
\end{definition}

\begin{lemma}
\label{lemma:negatable-bdt-obdd}
	The family of binary decision trees is polynomially negatable under the number of nodes as well as under the depth of a tree.
	The family of OBDDs is polynomially negatable under the number of interior nodes as well as under the width of the diagram.	
\end{lemma}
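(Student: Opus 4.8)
The plan is to construct, for each family, an explicit negation map $n$ and then verify the two requirements of Def.~\ref{def:simplenegation}: that $f_{n(\theta)}$ is the pointwise complement of $f_\theta$, i.e.~\eqref{eq:negation}, and that $R$ does not increase, i.e.~\eqref{eq:negation-bound}. In every case $n$ will in fact leave the underlying combinatorial structure of the form untouched and only permute the output labels, so that $R(n(\theta)) = R(\theta)$ — stronger than \eqref{eq:negation-bound} — and so that $n$ is computable in linear time.

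For a binary decision tree $\theta$, I would let $n(\theta)$ be the tree obtained by exchanging the two leaf labels, that is, replacing every leaf labelled $0$ by a leaf labelled $1$ and every leaf labelled $1$ by a leaf labelled $0$, while keeping the tree shape and all internal variable labels. For any $x \in X$, the root-to-leaf path traversed under $x$ is determined solely by the internal nodes and is therefore the same in $\theta$ and $n(\theta)$; only the label of the reached leaf is flipped, so $f_{n(\theta)}(x) = 1 - f_\theta(x)$, which is \eqref{eq:negation}. Since $\theta$ and $n(\theta)$ have identical node sets and identical depths, both the number-of-nodes regularizer and the depth regularizer satisfy $R(n(\theta)) = R(\theta)$, which gives \eqref{eq:negation-bound}.

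For an OBDD $\theta$ with its variable order, I would define $n(\theta)$ by swapping the two sinks — equivalently, by redirecting every edge pointing to the $0$-sink so that it points to the $1$-sink and conversely. This preserves the variable order, hence yields a valid OBDD, and, as in the tree case, the computation path under any $x$ is unchanged while the terminal value is complemented, giving \eqref{eq:negation}. The interior nodes and all edges among them are untouched, so the number of interior nodes is preserved; the width, being the maximum over variables of the number of interior nodes labelled by that variable, is likewise preserved; hence $R(n(\theta)) = R(\theta)$ for both regularizers, which is \eqref{eq:negation-bound}. (If one restricts to reduced OBDDs, the sink swap is a graph isomorphism that preserves the absence of redundant nodes and of duplicate subdiagrams, so $n(\theta)$ is again reduced.)

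I do not expect a genuine obstacle here; the only care needed is to match the precise bookkeeping conventions for ``number of nodes'', ``depth'', ``number of interior nodes'' and ``width'' so that the equalities $R(n(\theta)) = R(\theta)$ are literally correct, and to confirm that sink relabelling does not violate the order-consistency requirement built into the definition of an OBDD. Once those conventions are fixed, the verification of \eqref{eq:negation} and \eqref{eq:negation-bound} is immediate.
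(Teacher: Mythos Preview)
Your proposal is correct and takes essentially the same approach as the paper: define $n(\theta)$ by flipping the labels at the leaves (for trees) or sinks (for OBDDs), leaving the internal structure intact. The paper states this in a single sentence, while you spell out in more detail why \eqref{eq:negation} and \eqref{eq:negation-bound} follow; the additional care you take about order-consistency and reducedness for OBDDs is sound but not strictly needed for the statement as formulated.
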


\begin{proof}
	For any binary decision tree or OBDD $\theta$, a suitable $n(\theta)$ is obtained by flipping the labels 0 and 1 at those nodes that do not have outgoing edges. 
\end{proof}

\begin{lemma}
	\label{lemma:functionsimplenegatableh}
	For any instance $(J, X, A, B), \Theta, f, R$ of \textsc{separation}, where the family of functions $f$ is polynomially negatable under $R$, any function $n: \Theta \to \Theta$ according to Def.~\ref{def:simplenegation} and the instance of \textsc{partial-separation} wrt.~$(J, X, A, B), \Theta, f$ and $R$, the function 
	\begin{align}
	h : \quad \Theta & \to \Theta^2\nonumber\\
	\theta & \mapsto (\theta, n(\theta))
	\end{align}
	has the following properties: For any feasible solution $\theta$ to the instance of \textsc{separation}:
	\begin{enumerate}
		\item $h(\theta)$ is computable in polynomial time.
		\item $h(\theta) = (\theta, n(\theta))$ is such that $R(\theta) + R(n(\theta)) \leq 2R(\theta)$.
		\item $h(\theta)$ is a feasible solution to the instance of \textsc{partial-separation}.	
	\end{enumerate}
\end{lemma}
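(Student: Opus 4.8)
The plan is to verify the three asserted properties by direct inspection, since $h$ is given by an explicit formula and every ingredient needed is supplied by the hypotheses of the lemma.

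First I would dispatch Property~1. The map $h$ sends $\theta$ to the pair $(\theta, n(\theta))$; copying $\theta$ is trivially polynomial, and $n$ is computable in polynomial time by Def.~\ref{def:simplenegation}, so $h$ is computable in polynomial time. Property~2 is then immediate from the regularizer bound \eqref{eq:negation-bound}: since $R(n(\theta)) \le R(\theta)$, we obtain $R(\theta) + R(n(\theta)) \le 2 R(\theta)$.

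The substantive (though still routine) step is Property~3, namely that $(\theta, n(\theta))$ satisfies the three constraints of \textsc{partial-separation} whenever $\theta$ is feasible for \textsc{separation}. Feasibility of $\theta$ for \textsc{separation} gives $f_\theta(x) = 1$ for all $x \in A$ and $f_\theta(x) = 0$ for all $x \in B$. The former is exactly the exactness constraint on $A$ in the partial-separation instance. For the exactness constraint on $B$ I would invoke the negation identity \eqref{eq:negation}, which says $f_{n(\theta)} = 1 - f_\theta$; hence for $x \in B$ we get $f_{n(\theta)}(x) = 1 - f_\theta(x) = 1 - 0 = 1$. Finally, non-contradictoriness $f_\theta + f_{n(\theta)} \le 1$ holds, in fact with equality, by \eqref{eq:negation}. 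This completes the verification.

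I do not expect a genuine obstacle: the lemma simply records that the ``copy-and-negate'' construction transports a separating function into a feasible pair while at most doubling the regularizer. The only point deserving a little care is aligning the two exactness constraints of \textsc{partial-separation} --- one phrased via $f_\theta$ on $A$, the other via $f_{n(\theta)}$ on $B$ --- with the single pair of constraints defining \textsc{separation}, and this is precisely where the negation property \eqref{eq:negation} is used. The real difficulty lies ahead, in constructing the companion map $g \colon \Theta^2 \to \Theta$ of Fig.~\ref{fig:reduction-polynomially-negatable-mapping} and carrying out the approximation-preserving bookkeeping that turns this pair of maps into a full reduction.
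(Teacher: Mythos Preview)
Your proposal is correct and follows essentially the same approach as the paper: Properties~1 and~2 are immediate from the definition of $h$ and Def.~\ref{def:simplenegation}, and Property~3 is verified by checking exactness on $A$ directly, exactness on $B$ via the negation identity \eqref{eq:negation}, and non-contradictoriness again via \eqref{eq:negation}. The paper's own proof is slightly more terse but structurally identical.
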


\begin{proof}
	(1) and (2) hold by construction of $h$ and Def.~\ref{def:simplenegation}.
	
	(3) $f_\theta$ and $f_{n(\theta)}$ are such that $f_\theta + f_{n(\theta)} \leq 1$ and such that $f_\theta(A) = \{1\}$ and $f_\theta(B) = \{0\}$ as $\theta$ is a feasible solution to the instance of \textsc{separation}. 
	From this follows $f_{n(\theta)}(B) = \{1\}$, by \eqref{eq:negation}.
\end{proof}

\begin{corollary}
	\label{cor:forwardnphardnesssimplenegatable}
	For any instance $(J, X, A, B), \Theta, f, R, m$ of \textsc{separability}, where the family of functions $f$ is polynomially negatable under $R$, any function $n: \Theta \to \Theta$ according to Def.~\ref{def:simplenegation} and any solution $\theta$ to this instance, $(\theta, n(\theta))$ is a solution to the instance of \textsc{partial-separability} wrt.~$(J, X, A, B), \Theta, f, R$ and $2 m$. Moreover, this solution can be computed efficiently from $\theta$.
\end{corollary}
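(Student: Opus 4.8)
The plan is to derive the corollary directly from Lemma~\ref{lemma:functionsimplenegatableh}, since the only genuinely new ingredient is threading the regularizer bound through. First I would observe that any solution $\theta$ to the instance of \textsc{separability} wrt.~$(J, X, A, B), \Theta, f, R, m$ is, in particular, a feasible solution to the instance of \textsc{separation} wrt.~$(J, X, A, B), \Theta, f, R$: the two problems share the exactness constraints $f_\theta(A) = \{1\}$ and $f_\theta(B) = \{0\}$, and the bound $R(\theta) \le m$ present in \textsc{separability} is not part of feasibility for the optimization instance. So Lemma~\ref{lemma:functionsimplenegatableh} applies to $\theta$.

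Next I would read off the three conclusions of Lemma~\ref{lemma:functionsimplenegatableh} for this $\theta$. Property~(3) gives that $h(\theta) = (\theta, n(\theta))$ satisfies the exactness constraints $f_\theta(A) = \{1\}$ and $f_{n(\theta)}(B) = \{1\}$ together with non-contradictoriness $f_\theta + f_{n(\theta)} \le 1$, i.e.~all constraints of the \textsc{partial-separation} instance other than boundedness. Property~(2), combined with the hypothesis $R(\theta) \le m$, yields $R(\theta) + R(n(\theta)) \le 2 R(\theta) \le 2m$, which is precisely the boundedness constraint of the \textsc{partial-separability} instance wrt.~the bound $2m$. Hence $(\theta, n(\theta))$ is a solution to that instance. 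The efficiency claim follows from Property~(1) of Lemma~\ref{lemma:functionsimplenegatableh} (equivalently, from the polynomial-time computability of $n$ demanded by Def.~\ref{def:simplenegation}).

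I do not expect a real obstacle here: the corollary is essentially a restatement of Lemma~\ref{lemma:functionsimplenegatableh} for the decision version of the problem. The only points requiring minor care are the bookkeeping of the factor two in the regularizer bound and the (trivial) observation that the exactness constraints of \textsc{separability} and \textsc{separation} coincide, so that a solution of the former is a feasible solution of the latter and Lemma~\ref{lemma:functionsimplenegatableh} is applicable.
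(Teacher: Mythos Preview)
Your proposal is correct and matches the paper's approach: the paper states Corollary~\ref{cor:forwardnphardnesssimplenegatable} without a separate proof, treating it as an immediate consequence of Lemma~\ref{lemma:functionsimplenegatableh}, and your argument spells out exactly that---applying properties (1)--(3) of the lemma and combining property~(2) with the bound $R(\theta) \le m$ to obtain $R(\theta) + R(n(\theta)) \le 2m$.
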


\begin{lemma}
	\label{lemma:functionsimplenegatableg}
	For any instance $(J, X, A, B), \Theta, f, R$ of \textsc{separation}, where the family of functions $f$ is polynomially negatable under $R$, any function $n: \Theta \to \Theta$ according to Def.~\ref{def:simplenegation} and the instance of \textsc{partial-separation} wrt.~$(J, X, A, B), \Theta, f$ and $R$, 
	any function $g: \Theta^2 \to \Theta$ such that for any $(\theta, \theta') \in \Theta^2$:
	\begin{equation}
		g(\theta, \theta') \in \argmin_{\theta''\in \{\theta, n(\theta')\}} R(\theta'')
	\end{equation}
	has the following properties: For any feasible solution $(\theta, \theta')\in \Theta^2$ to the instance of \textsc{partial-separation}:
	\begin{enumerate}
		\item $g(\theta, \theta')$ is computable in polynomial time.
		\item $g(\theta, \theta')$ is such that $R(g(\theta, \theta')) \leq (R(\theta) + R(\theta'))/2$.
		\item $g(\theta, \theta')$ is a feasible solution to the instance of \textsc{separation}.
	\end{enumerate}
\end{lemma}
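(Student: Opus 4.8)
The plan is to verify the three listed properties directly, deferring the only nontrivial point — property (3) — to a short case distinction on which element of $\{\theta, n(\theta')\}$ the $\argmin$ selects.

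For property (1), I would observe that evaluating $g(\theta,\theta')$ amounts to computing $n(\theta')$, which is possible in polynomial time since $f$ is polynomially negatable under $R$ (Def.~\ref{def:simplenegation}), then evaluating $R$ at $\theta$ and at $n(\theta')$ and returning whichever argument attains the smaller value; each of these steps is polynomial. For property (2), note that $R(g(\theta,\theta')) = \min\{R(\theta), R(n(\theta'))\}$ by definition of $g$. Applying \eqref{eq:negation-bound} to $\theta'$ gives $R(n(\theta')) \leq R(\theta')$, hence $R(g(\theta,\theta')) \leq \min\{R(\theta), R(\theta')\}$, and since the minimum of two non-negative integers is bounded by their arithmetic mean, this is at most $(R(\theta) + R(\theta'))/2$.

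For property (3), I would split into the cases $g(\theta,\theta') = \theta$ and $g(\theta,\theta') = n(\theta')$. In the first case, feasibility of $(\theta,\theta')$ for \textsc{partial-separation} gives $f_\theta(A) = \{1\}$ directly; moreover $f_{\theta'}(B) = \{1\}$ together with the non-contradictoriness constraint $f_\theta + f_{\theta'} \leq 1$ forces $f_\theta(B) = \{0\}$, so $\theta$ is feasible for \textsc{separation}. In the second case, $f_{\theta'}(B) = \{1\}$ and \eqref{eq:negation} yield $f_{n(\theta')}(B) = \{0\}$; and $f_\theta(A) = \{1\}$ with non-contradictoriness gives $f_{\theta'}(A) = \{0\}$, so \eqref{eq:negation} yields $f_{n(\theta')}(A) = \{1\}$. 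Hence $n(\theta')$ is feasible for \textsc{separation} in this case as well.

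I do not expect any real obstacle here: the argument is a routine unpacking of the definitions of \textsc{partial-separation}, \textsc{separation}, and polynomial negatability. The only mild subtlety worth stating explicitly is that $\argmin$ denotes a set, so $g$ may legitimately return either element, which is precisely why property (3) must be checked in both cases rather than for one fixed choice.
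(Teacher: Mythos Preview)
Your proposal is correct and follows essentially the same approach as the paper: properties (1) and (2) are verified identically, and for (3) the paper simply asserts that both $\theta$ and $n(\theta')$ are feasible for \textsc{separation}, which is exactly what your case distinction spells out in detail.
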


\begin{proof}
	(1) holds by construction of $g$ and the fact that $n: \Theta \to \Theta$ is computable in polynomial time.
	
	(2) holds because
	\begin{align}
	R(g(\theta, \theta')) &	= \min \{ R(\theta), R(n(\theta')) \} \\
	& \overset{\eqref{eq:negation-bound} }{\leq} \min \{ R(\theta), R(\theta') \} \\
	& \leq \left(R(\theta) + R(\theta')\right)/2
	\end{align}
	
	(3) holds by the fact that both $\theta$ and $n(\theta')$ are feasible solutions to the instance $(J, X, A, B), \Theta, f$ of \textsc{separation}.
\end{proof}

\begin{corollary}
	\label{cor:backwardnphardnesssimplenegatable}
	For any instance $(J, X, A, B), \Theta, f, R, m$ of \textsc{separability}, where the family of functions $f$ is polynomially negatable under $R$, any function $n: \Theta \to \Theta$ according to Def.~\ref{def:simplenegation} and any solution $(\theta, \theta')$ to the instance of \textsc{partial-separability} wrt. $(J, X, A, B), \Theta, f, R$ and $2m$, 
	$\theta$ or $n(\theta')$ is a solution to the instance of \textsc{separability} wrt. $(J, X, A, B), \Theta, f, R$ and $m$. Moreover, this solution can be computed efficiently from $(\theta, \theta')$.
\end{corollary}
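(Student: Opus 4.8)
The plan is to obtain this corollary directly from Lemma~\ref{lemma:functionsimplenegatableg}, which already carries the entire burden. First I would unpack the hypothesis: by Def.~\ref{definition:problems}, a solution $(\theta,\theta')$ to the instance of \textsc{partial-separability} wrt.~$(J,X,A,B),\Theta,f,R$ and $2m$ satisfies exactness ($f_\theta(A)=\{1\}$ and $f_{\theta'}(B)=\{1\}$), non-contradictoriness ($f_\theta+f_{\theta'}\le 1$), and boundedness ($R(\theta)+R(\theta')\le 2m$). In particular, $(\theta,\theta')$ is a feasible solution to the instance of \textsc{partial-separation} wrt.~$(J,X,A,B),\Theta,f$ and $R$, so Lemma~\ref{lemma:functionsimplenegatableg} applies to it.

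Next, fixing any $g$ as in that lemma and writing $\tau := g(\theta,\theta')$, I would combine its three parts with the boundedness bound: by part~(3), $\tau$ is a feasible solution to the instance of \textsc{separation}, i.e.~$f_\tau(A)=\{1\}$ and $f_\tau(B)=\{0\}$; and by part~(2),
\begin{align}
R(\tau)\ \le\ \tfrac12\bigl(R(\theta)+R(\theta')\bigr)\ \le\ \tfrac12\cdot 2m\ =\ m .
\end{align}
Hence $\tau$ also satisfies the boundedness constraint of \textsc{separability} wrt.~bound $m$, so $\tau$ is a solution to that instance. Since by the definition of $g$ we have $\tau\in\{\theta,n(\theta')\}$, this shows that $\theta$ or $n(\theta')$ is a solution to the instance of \textsc{separability} wrt.~$(J,X,A,B),\Theta,f,R$ and $m$. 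For the efficiency claim I would invoke part~(1) of Lemma~\ref{lemma:functionsimplenegatableg}: computing $\tau$ reduces to computing $n(\theta')$, which is possible in polynomial time since $f$ is polynomially negatable under $R$ (Def.~\ref{def:simplenegation}), and to comparing $R(\theta)$ with $R(n(\theta'))$; returning whichever of $\theta$ and $n(\theta')$ attains the minimum gives the desired solution in polynomial time in the size of $(\theta,\theta')$ and the instance.

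I do not anticipate a genuine obstacle: all the substance is in Lemma~\ref{lemma:functionsimplenegatableg}, and the only thing to verify is the bookkeeping that a \emph{solution to \textsc{separability} wrt.~bound $m$} is exactly a feasible solution to \textsc{separation} of regularizer value at most $m$, together with the trivial observation that $g$ returns one of the two candidates $\theta$ and $n(\theta')$. The halving in part~(2) of the lemma, applied to the loosened bound $2m$, is precisely what restores the original bound $m$, so this corollary is the exact counterpart of Cor.~\ref{cor:forwardnphardnesssimplenegatable} needed to make the reduction of Fig.~\ref{fig:reduction-polynomially-negatable-mapping} approximation preserving.
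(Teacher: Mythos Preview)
Your proposal is correct and matches the paper's approach exactly: the corollary is stated without its own proof in the paper precisely because it is an immediate consequence of Lemma~\ref{lemma:functionsimplenegatableg}, and you have spelled out that derivation accurately, including the halving argument that converts the bound $2m$ back to $m$.
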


\begin{theorem}
\label{theorem:hardness-partial-separability}
	For any family $f$ and regularizer $R$ such that $f$ is polynomially negatable under $R$,
	\textsc{separability} $\leq_p$ \textsc{partial-separability}.
\end{theorem}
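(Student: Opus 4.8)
The plan is to give an explicit polynomial-time many-one reduction together with the scaling factor needed to make both directions of the equivalence go through at once. Given an instance $(J, X, A, B), \Theta, f, R, m$ of \textsc{separability} in which $f$ is polynomially negatable under $R$, I would map it to the instance $(J, X, A, B), \Theta, f, R, 2m$ of \textsc{partial-separability}, i.e.~the same Boolean labeled data, the same family of forms, the same regularizer, and the doubled bound $2m$ in place of $m$. Computing this map only involves doubling the integer $m$, so it is trivially polynomial time; the substance of the proof is that the source instance is a yes-instance of \textsc{separability} if and only if the target instance is a yes-instance of \textsc{partial-separability}.

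For the forward implication, I would assume the \textsc{separability} instance is a yes-instance, witnessed by some $\theta \in \Theta$ with $R(\theta) \leq m$, $f_\theta(A) = \{1\}$ and $f_\theta(B) = \{0\}$, fix a negation map $n$ as in Def.~\ref{def:simplenegation}, and invoke Corollary~\ref{cor:forwardnphardnesssimplenegatable} to conclude that $(\theta, n(\theta))$ is a solution to the \textsc{partial-separability} instance with bound $2m$; hence that instance is a yes-instance. For the backward implication, I would assume the \textsc{partial-separability} instance with bound $2m$ is a yes-instance, witnessed by some $(\theta, \theta') \in \Theta^2$, and invoke Corollary~\ref{cor:backwardnphardnesssimplenegatable} to conclude that $\theta$ or $n(\theta')$ is a solution to the \textsc{separability} instance with bound $m$; hence that instance is a yes-instance. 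Combining the two implications yields the claimed equivalence, and therefore \textsc{separability} $\leq_p$ \textsc{partial-separability}.

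I do not expect a genuine obstacle here: essentially all the work has already been discharged in Lemmas~\ref{lemma:functionsimplenegatableh} and~\ref{lemma:functionsimplenegatableg} and their corollaries, with Lemma~\ref{lemma:negatable-bdt-obdd} supplying the negation map $n$ for the concrete families of interest. The only point requiring any care is the choice of the bound $2m$ on the target side: it must be large enough that the feasible solution $(\theta, n(\theta))$ of objective at most $2R(\theta)$ produced in the forward direction stays within budget, and at the same time tight enough that Lemma~\ref{lemma:functionsimplenegatableg}, which halves the objective, brings any partial-separability witness of cost at most $2m$ back down to cost at most $m$ for \textsc{separability}. The factor $2$ matches both requirements simultaneously, which is precisely why the reduction is also approximation-preserving (as recorded in Fig.~\ref{fig:reduction-polynomially-negatable-mapping}), a fact that will be exploited in the next section.
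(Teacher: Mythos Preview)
Your proposal is correct and follows exactly the paper's approach: the paper's own proof is simply ``By Corollaries~\ref{cor:forwardnphardnesssimplenegatable} and~\ref{cor:backwardnphardnesssimplenegatable},'' and you have spelled out the underlying map $m \mapsto 2m$ together with the two implications that those corollaries supply. The only superfluous remark is the appeal to Lemma~\ref{lemma:negatable-bdt-obdd}, which is not needed here since the theorem already assumes $f$ is polynomially negatable under $R$.
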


\begin{proof}
	By Corollaries~\ref{cor:forwardnphardnesssimplenegatable} and \ref{cor:backwardnphardnesssimplenegatable}.
\end{proof}

\begin{theorem}
\begin{enumerate}
	\item For the family $f$ of binary decision trees and $R$ the depth or number of nodes of a tree, \textsc{partial-separability} is \textsc{np}-complete.

	\item For the family $f$ of OBDDs and $R$ the number of interior nodes or width of the diagram, \textsc{partial-separability} is \textsc{np}-complete.
\end{enumerate}
\end{theorem}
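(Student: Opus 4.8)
The plan is to prove both statements together, in each case splitting \textsc{np}-completeness into membership in \textsc{np} and \textsc{np}-hardness. The hardness part is essentially immediate from the results already in place, so the only genuine work is a short membership argument.

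For \textsc{np}-hardness I would compose Lemma~\ref{lemma:negatable-bdt-obdd} with Theorem~\ref{theorem:hardness-partial-separability}. By Lemma~\ref{lemma:negatable-bdt-obdd}, binary decision trees are polynomially negatable under the number of nodes and under the depth, and OBDDs are polynomially negatable under the number of interior nodes and under the width; hence Theorem~\ref{theorem:hardness-partial-separability} yields \textsc{separability} $\leq_p$ \textsc{partial-separability} for each of these four pairs of family and regularizer. Since \textsc{separability} is \textsc{np}-hard for binary decision trees under both depth and node count \cite{Haussler1988,Hancock1996} and for OBDDs under both interior-node count and width \cite{sauerhoff-1996,takenaga-2000,Hirata1996}, \textsc{np}-hardness of \textsc{partial-separability} follows in all four cases.

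For membership in \textsc{np} I would use a pair $(\theta,\theta') \in \Theta^2$ as certificate. Boundedness $R(\theta)+R(\theta') \leq m$ and the two exactness conditions are checked in polynomial time by evaluating the two forms at the polynomially many points of $A$ and of $B$. The condition that is not obviously polynomial-time checkable, and hence the main obstacle, is non-contradictoriness $f_\theta + f_{\theta'} \leq 1$, equivalently $f_\theta \cdot f_{\theta'} \equiv 0$, since it quantifies over all $2^{|J|}$ elements of $X$. For binary decision trees I would dispose of it by observing that $f_\theta^{-1}(1)$ is the union of the at most $|\theta|$ subcubes induced by the root-to-leaf paths ending in a $1$-leaf, so that $f_\theta \cdot f_{\theta'} \equiv 0$ holds iff every accepting path of $\theta$ disagrees with every accepting path of $\theta'$ on the value of some variable, which is decided by polynomially many pairwise tests. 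For OBDDs over a common variable ordering I would instead compute the product diagram representing $f_\theta \cdot f_{\theta'}$ by the standard polynomial-time synthesis operation and test whether it is the constant-$0$ diagram. Combined with the hardness part, this gives \textsc{np}-completeness in all four cases. One point to be careful about is that \textsc{np}-hardness of \textsc{separability} must be invoked for exactly the regularizers appearing in the statement (tree depth and tree node count, OBDD interior-node count and OBDD width), not merely for the most commonly cited size variants.
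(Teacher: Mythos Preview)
Your proposal is correct and follows essentially the same structure as the paper: \textsc{np}-hardness via Lemma~\ref{lemma:negatable-bdt-obdd}, Theorem~\ref{theorem:hardness-partial-separability}, and the known hardness of \textsc{separability}; membership in \textsc{np} via polynomial-time verification of a certificate $(\theta,\theta')$. The paper's proof simply asserts that ``solutions can be verified in polynomial time'' without further comment, whereas you correctly isolate non-contradictoriness as the only non-obvious condition and supply concrete polynomial-time checks (pairwise subcube disjointness for decision trees, product-OBDD synthesis for OBDDs); in that sense your argument is more complete than the paper's.
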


\begin{proof}
	These special cases of \textsc{partial-separability} are in \textsc{np} as solutions can be verified in polynomial time. 
	
	(1) This special case of \textsc{partial-separability} is \textsc{np}-hard by Theorem~\ref{theorem:hardness-partial-separability} and the fact that binary decision trees are polynomially negatable under $R$ and the fact that \textsc{separability} for binary decision trees and $R$ is \textsc{np}-hard \citep{Haussler1988}.
	
	(2) This special case of \textsc{partial-separability} is \textsc{np}-hard by Theorem~\ref{theorem:hardness-partial-separability} and the fact that OBDDs are polynomially negatable under $R$ and the fact that \textsc{separability} for OBDDs and $R$ is \textsc{np}-hard \citep{takenaga-2000}.
\end{proof}

\subsection{Disjunctive Normal Forms}

In this section, we establish \textsc{np}-completeness of \textsc{separability} for the family of DNFs regularized by length or depth, by reduction of \textsc{set-cover}. 
This case requires special attention as DNFs are not polynomially negatable \citep{Crama2011}.
Here, we already relate feasible solutions as shown in Fig.~\ref{fig:reduction-dnf-mapping} in a way that will allow us in the next section to transfer an inapproximability bound from \textsc{min-set-cover} to \textsc{partial-separation}.

\begin{definition}[\citet{Haussler1988}]
	For any finite, non-empty set $U$ and any $\Sigma \subseteq 2^{U}$, we use the term \emph{Haussler data} for the Boolean labeled data $D_{U, \Sigma}=(\Sigma, X, A, B)$ such that $B = \{0^\Sigma\}$ and 
	$A = \{ x^u \in \{0,1\}^\Sigma \;|\; u \in U\}$
	such that for any $u \in U$ and any $\sigma \in \Sigma$, we have $x^u_\sigma = 1$ iff $u \in \sigma$.
\end{definition}

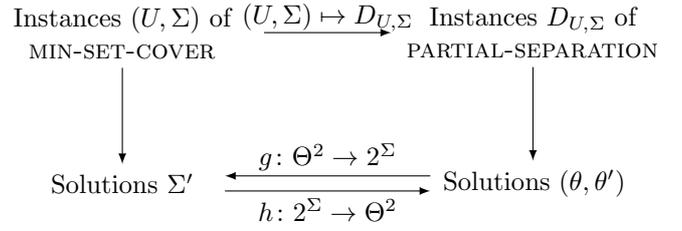
\begin{figure}[t]
	\hspace{-3ex}
	\begin{tikzpicture}[xscale=2.7,yscale=-1]
		\node[text width=23ex, align=center] at (0, 0) (icov) {Instances $(U, \Sigma)$ of \textsc{min-set-cover}};
		\node[text width=23ex, align=center] at (2, 0) (isep) {Instances $D_{U, \Sigma}$ of \textsc{partial-separation}};
		\node[text width=23ex, align=center] at (0, 2) (scov) {Solutions $\Sigma'$};
		\node[text width=23ex, align=center] at (2, 2) (ssep) {Solutions $(\theta,\theta')$};
		\draw[-latex] (icov) -- (isep);
		\node at (1, -0.2) {$(U, \Sigma) \mapsto D_{U, \Sigma}$};
		\draw[-latex] (1.5, 1.9) -- (0.5, 1.9);
		\node at (1, 1.65) {$g \colon \Theta^2 \to 2^{\Sigma}$};
		\draw[-latex] (0.5, 2.1) -- (1.5, 2.1);
		\node at (1, 2.35) {$h \colon 2^{\Sigma} \to \Theta^2$};
		\draw[-latex] (icov) -- (scov);
		\draw[-latex] (isep) -- (ssep);
	\end{tikzpicture}
	\caption{For DNFs, we reduce \textsc{min-set-cover} to \textsc{partial-separation} in an approximation preserving manner by mapping instances of \textsc{min-set-cover} to instances of \textsc{partial-separation} and by relating solutions to these instances by the functions $g$ and $h$.}
	\label{fig:reduction-dnf-mapping}
\end{figure}

\begin{lemma}
	\label{lemma:functionsetcovertodnfs}
	For any instance $(U, \Sigma)$ of \textsc{min-set-cover},
	consider the instance of \textsc{partial-separation} 
	wrt.~the family $f: \Theta \to \{0, 1\}^\Sigma$ of DNFs and $R \in \{R_l, R_d\}$
	and wrt.~the Haussler data $D_{U, \Sigma}$.
	Then, the function $h: 2^{\Sigma} \to \Theta^2$ such that for any $\Sigma'\subseteq \Sigma$, $h(\Sigma') = \left(\theta, \theta'\right)$ with
	\begin{align}
	\theta &= \left\{(\{\sigma\}, \emptyset) \ \vert\ \sigma\in\Sigma'\right\} \label{eq:dnf:theta}\\
	\theta' &= \{(\emptyset, \Sigma')\} \label{eq:dnf:thetaprime}
	\end{align}
	has the following properties: 
	For any feasible solution $\Sigma'$ to the instance of \textsc{min-set-cover}:
	\begin{enumerate}
		\item $h(\Sigma')$ is computable in linear time, $O(|\Sigma'|)$.
		\item $h(\Sigma') = (\theta, \theta')$ is such that $R_l(\theta) + R_l(\theta') = 2\vert \Sigma'\vert$ and $R_d(\theta) + R_d(\theta') = \vert \Sigma'\vert + 1$.
		\item $h(\Sigma')$ is a feasible solution to the instance of \textsc{partial-separation}.
	\end{enumerate} 
\end{lemma}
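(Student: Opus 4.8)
The plan is to verify the three asserted properties of $h$ one at a time, directly from Def.~\ref{def:dnfs}, the definition of Haussler data, and the feasibility conditions of \textsc{partial-separation} in Def.~\ref{definition:problems}. Nothing deep is needed: the content is that the two DNFs $\theta,\theta'$ built from a set cover $\Sigma'$ are the ``obvious'' ones — $\theta$ is the disjunction of the positive literals $x_\sigma$ over $\sigma\in\Sigma'$ (each term $(\{\sigma\},\emptyset)$ contributes the factor $\prod_{j\in\{\sigma\}}x_j=x_\sigma$), and $\theta'$ is the single conjunction of the negated literals $\bar x_\sigma$ over $\sigma\in\Sigma'$ (the term $(\emptyset,\Sigma')$ contributes $\prod_{\sigma\in\Sigma'}(1-x_\sigma)$). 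So I expect the only real work to be careful bookkeeping, with no genuine obstacle.

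For property~(1), given $\Sigma'$ one emits the term $(\{\sigma\},\emptyset)$ for each $\sigma\in\Sigma'$ to form $\theta$ and the single term $(\emptyset,\Sigma')$ to form $\theta'$, which takes time $O(|\Sigma'|)$. For property~(2) I would substitute into the formulas for $R_l$ and $R_d$ from Def.~\ref{def:dnfs}: each of the $|\Sigma'|$ terms of $\theta$ has size $|\{\sigma\}|+|\emptyset|=1$, so $R_l(\theta)=|\Sigma'|$ and $R_d(\theta)=1$, while the single term of $\theta'$ has size $|\emptyset|+|\Sigma'|=|\Sigma'|$, so $R_l(\theta')=R_d(\theta')=|\Sigma'|$; summing gives $R_l(\theta)+R_l(\theta')=2|\Sigma'|$ and $R_d(\theta)+R_d(\theta')=|\Sigma'|+1$. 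The one point deserving a remark is the edge case $\Sigma'=\emptyset$, in which $R_d(\theta)=\max\emptyset$ is undefined and $f_\theta$ would be identically $0$; this is harmless because any \emph{feasible} set cover of the non-empty set $U$ must itself be non-empty, so the maximum defining $R_d(\theta)$ ranges over a non-empty index set.

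For property~(3) I would first record the semantics implied by Def.~\ref{def:dnfs}: $f_\theta(x)=1$ iff $x_\sigma=1$ for some $\sigma\in\Sigma'$, and $f_{\theta'}(x)=1$ iff $x_\sigma=0$ for all $\sigma\in\Sigma'$. Exactness on $A$: for every $u\in U$, feasibility of $\Sigma'$ yields some $\sigma\in\Sigma'$ with $u\in\sigma$, hence $x^u_\sigma=1$ and $f_\theta(x^u)=1$. Exactness on $B$: the unique point $0^\Sigma$ of $B$ has all coordinates equal to $0$, so $f_{\theta'}(0^\Sigma)=1$. Non-contradictoriness: if $f_\theta(x)=f_{\theta'}(x)=1$ for some $x\in X$, then a coordinate $x_\sigma$ with $\sigma\in\Sigma'$ would have to be simultaneously $1$ and $0$, which is impossible; since both functions are $\{0,1\}$-valued, this gives $f_\theta+f_{\theta'}\le 1$ on all of $X$. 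Hence $h(\Sigma')$ satisfies every feasibility condition of Def.~\ref{definition:problems}, which establishes property~(3).
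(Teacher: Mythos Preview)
Your proof is correct and follows essentially the same route as the paper: parts (1) and (2) are handled by direct inspection (the paper merely says ``by construction'' where you spell out the arithmetic), and part (3) is argued from the explicit formulas $f_\theta(x)=\sum_{\sigma\in\Sigma'}x_\sigma$ and $f_{\theta'}(x)=\prod_{\sigma\in\Sigma'}(1-x_\sigma)$ via the cover property and the definition of $D_{U,\Sigma}$. The only cosmetic difference is that the paper establishes non-contradictoriness by observing the stronger identity $f_\theta=1-f_{\theta'}$, whereas you argue directly that $f_\theta(x)=f_{\theta'}(x)=1$ is impossible; your additional remark on the empty-cover edge case is a welcome clarification the paper omits.
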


\begin{proof}
	(1) and (2) hold by construction of $h$.
	(3) Firstly, $f_{\theta}$ and $f_{\theta'}$ are such that for any $x \in \{0, 1\}^\Sigma$:
	\begin{align}
	&f_{\theta}(x) = \sum_{\sigma\in \Sigma'}x_\sigma \label{eq:dnf:ftheta} \\
	&f_{\theta'}(x) = \prod_{\sigma\in \Sigma'} \left(1-x_\sigma\right) \label{eq:dnf:fthetaprime}
	\end{align}
	Secondly, these functions are non-contradicting, i.e.~$f_{\theta} + f_{\theta'} \leq 1$, since $f_{\theta} = 1 - f_{\theta'}$. 	
	Thirdly, $f_{\theta'}(0^\Sigma) = 1$, by \eqref{eq:dnf:fthetaprime}. 
	Moreover, since $\Sigma'$ is a cover of $U$:
	\begin{align}
	&\forall u\in U ~\exists \sigma \in \Sigma'\colon u\in \sigma & \nonumber\\
	\Rightarrow~ &\forall u\in U ~\exists \sigma \in \Sigma'\colon x^u_\sigma = 1 & (\textnormal{Def.~of $D_{U, \Sigma}$})\\
	\Rightarrow~ & \forall u\in U \colon f_{\theta}(x^u) = 1& (\textnormal{Def.~of $\theta$})
	\end{align}
	Thus, $h(\Sigma') = (\theta, \theta')$ is a feasible solution to the instance of \textsc{partial-separation}.
\end{proof}

\begin{corollary}
\label{corollary:forwardnphardnessdnfs}	
For any instance $(U, \Sigma, m)$ of \textsc{set-cover} and any solution $\Sigma'\subseteq \Sigma$ to this instance, 
$h(\Sigma')$ is a solution to the instance of \textsc{partial-separability} 
wrt.~the family $f: \Theta \to \{0, 1\}^\Sigma$ of DNFs, the Haussler data $D_{U, \Sigma}$, the regularizer $R_l$ and the bound $2m$.
The same holds for the regularizer $R_d$ and the bound $m+1$.
Moreover, the solution $h(\Sigma')$ can be computed efficiently from $\Sigma'$.
\end{corollary}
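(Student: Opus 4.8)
The plan is to obtain the corollary as an immediate consequence of Lemma~\ref{lemma:functionsetcovertodnfs}. The only point that needs checking is that a solution to the \textsc{set-cover} instance $(U, \Sigma, m)$ is, in particular, a feasible solution to the \textsc{min-set-cover} instance $(U, \Sigma)$ whose cardinality is additionally bounded by $m$, so that the lemma applies to it.

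First I would note that if $\Sigma' \subseteq \Sigma$ is a solution to the instance $(U, \Sigma, m)$ of \textsc{set-cover}, then $\bigcup_{\sigma \in \Sigma'} \sigma = U$ and $|\Sigma'| \leq m$. Hence $\Sigma'$ is a feasible solution to the instance $(U, \Sigma)$ of \textsc{min-set-cover}, and Lemma~\ref{lemma:functionsetcovertodnfs} applies to $\Sigma'$, yielding $h(\Sigma') = (\theta, \theta')$ with the three properties stated there.

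Next I would read off the claims of the corollary from these three properties. Property~(3) states that $(\theta, \theta')$ satisfies the exactness constraints $\forall x \in A \colon f_\theta(x) = 1$ and $\forall x \in B \colon f_{\theta'}(x) = 1$ together with the non-contradictoriness constraint $f_\theta + f_{\theta'} \leq 1$ of the instance of \textsc{partial-separation} wrt.~the DNFs and $D_{U, \Sigma}$. Property~(2) gives $R_l(\theta) + R_l(\theta') = 2|\Sigma'|$ and $R_d(\theta) + R_d(\theta') = |\Sigma'| + 1$; combining with $|\Sigma'| \leq m$, we obtain $R_l(\theta) + R_l(\theta') \leq 2m$ and $R_d(\theta) + R_d(\theta') \leq m + 1$, i.e.~the boundedness constraint holds with bound $2m$ for the regularizer $R_l$ and with bound $m+1$ for the regularizer $R_d$. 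Thus $h(\Sigma')$ is a solution to the respective instance of \textsc{partial-separability}. Finally, Property~(1) says $h(\Sigma')$ is computable in time $O(|\Sigma'|)$, which establishes the efficiency claim.

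There is no substantial obstacle: the entire argument is a translation of the three properties of $h$ into the vocabulary of \textsc{partial-separability}. The only thing requiring a moment's care is matching the single cardinality bound $|\Sigma'| \leq m$ to the two distinct target bounds, $2m$ for $R_l$ and $m+1$ for $R_d$, which is dictated precisely by the two identities in Property~(2) of the lemma.
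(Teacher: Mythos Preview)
Your proposal is correct and matches the paper's approach: the paper states this corollary without proof, treating it as an immediate consequence of Lemma~\ref{lemma:functionsetcovertodnfs}, and your argument spells out precisely that deduction.
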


\begin{lemma}
	\label{lemma:functiondnfstosetcover}
	For any instance $(U, \Sigma)$ of \textsc{min-set-cover} and the instance of \textsc{partial-separation} wrt.~the family $f: \Theta \to \{0, 1\}^\Sigma$ of DNFs, $R \in \{R_l, R_d\}$ and the Haussler data $D_{U, \Sigma}$,
	any function $g: \Theta^2 \to 2^\Sigma$ such that for any $(\theta, \theta') \in \Theta^2$,
	\begin{align}
	g(\theta, \theta') \in \argmin_{\Sigma' \in \{\Sigma'_0, \Sigma'_1\}} |\Sigma'| 
	\label{eq:dnf:defg}
	\end{align}
	with
	\begin{align}
	\Sigma_0' & = \bigcup_{(\Sigma_0, \Sigma_1)\in \theta} \Sigma_0 
	\label{eq:dnf:defsigma0}
	\\
	\Sigma_1' & \in \begin{cases}
	\left\{\Sigma_1 \subseteq \Sigma \ \vert\ (\emptyset, \Sigma_1)\in \theta'\right\} & \textnormal{if non-empty}\\
	\left\{\emptyset\right\} & \textnormal{ otherwise}
	\end{cases}
	\label{eq:dnf:defsigma1}
	\end{align}
	has the following properties: 
	For any feasible solution $(\theta, \theta')\in \Theta^2$ to the instance of \textsc{partial-separation}:
	\begin{enumerate}
		\item $g(\theta, \theta')$ is computable in linear time, $O(R_l(\theta) + R_l(\theta'))$.
		\item $g(\theta, \theta')$ is such that $\vert g(\theta, \theta')\vert \leq (R_l(\theta) + R_l(\theta'))/2$ and $\vert g(\theta, \theta')\vert \leq R_d(\theta) + R_d(\theta') - 1$. 
		\item $g(\theta, \theta')$ is a feasible solution to the instance of \textsc{min-set-cover}.
	\end{enumerate}
\end{lemma}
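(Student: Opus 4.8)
The plan is to fix an arbitrary feasible solution $(\theta,\theta')$ to the instance of \textsc{partial-separation} and first extract the structural consequences of feasibility. I would observe that $\theta\neq\emptyset$ and $\theta'\neq\emptyset$ (otherwise $f_\theta\equiv 0$ or $f_{\theta'}\equiv 0$, contradicting exactness since $A,B\neq\emptyset$), and that $\theta$ contains no term of the form $(\emptyset,\Sigma_1)$: any such term evaluates to one at $0^\Sigma$, which would give $f_\theta(0^\Sigma)=1$ and, together with $f_{\theta'}(0^\Sigma)=1$, violate non-contradictoriness. Hence every term $(\Sigma_0,\Sigma_1)\in\theta$ has $\Sigma_0\neq\emptyset$, so $|\Sigma_0|+|\Sigma_1|\geq 1$, giving $R_d(\theta)\geq 1$ and $|\theta|\leq R_l(\theta)$. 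Dually, $f_{\theta'}(0^\Sigma)=1$ forces some term of $\theta'$ to evaluate to one at $0^\Sigma$, and only terms with empty positive part do so; hence $\theta'$ contains a term $(\emptyset,\Sigma_1)$, so the set in \eqref{eq:dnf:defsigma1} is non-empty and $\Sigma_1'$ is such a term, with $|\Sigma_1'|\leq R_l(\theta')$ and $|\Sigma_1'|\leq R_d(\theta')$.

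For property~3 I would show that \emph{both} $\Sigma_0'$ and $\Sigma_1'$ cover $U$, so that the tie-break in \eqref{eq:dnf:defg} is immaterial. For $\Sigma_0'$: given $u\in U$, $f_\theta(x^u)=1$ yields a term $(\Sigma_0,\Sigma_1)\in\theta$ satisfied by $x^u$; by the above $\Sigma_0\neq\emptyset$, and for any $\sigma\in\Sigma_0$ satisfaction forces $x^u_\sigma=1$, i.e.\ $u\in\sigma$ by the definition of $D_{U,\Sigma}$, while $\sigma\in\Sigma_0\subseteq\Sigma_0'$. For $\Sigma_1'$: given $u\in U$, if $u$ lay in no $\sigma\in\Sigma_1'$ then $x^u_\sigma=0$ for all $\sigma\in\Sigma_1'$, so the term $(\emptyset,\Sigma_1')\in\theta'$ would be satisfied by $x^u$, giving $f_{\theta'}(x^u)=1$ and contradicting non-contradictoriness together with $f_\theta(x^u)=1$. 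Both $\Sigma_0'$ and $\Sigma_1'$ are subsets of $\Sigma$, so $g(\theta,\theta')$ is a feasible cover.

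For property~2 I would bound $|\Sigma_0'|=\bigl|\bigcup_{(\Sigma_0,\Sigma_1)\in\theta}\Sigma_0\bigr|\leq\sum_{(\Sigma_0,\Sigma_1)\in\theta}|\Sigma_0|\leq R_l(\theta)$ and combine with $|\Sigma_1'|\leq R_l(\theta')$ to get $|g(\theta,\theta')|=\min\{|\Sigma_0'|,|\Sigma_1'|\}\leq\min\{R_l(\theta),R_l(\theta')\}\leq(R_l(\theta)+R_l(\theta'))/2$. For the depth bound I would use only $|g(\theta,\theta')|\leq|\Sigma_1'|\leq R_d(\theta')$ and then $R_d(\theta)\geq 1$ to conclude $|\Sigma_1'|\leq R_d(\theta)+R_d(\theta')-1$. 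Property~1 is an implementation remark: $\Sigma_0'$ is built by a single pass over $\theta$ touching each positive literal once, which is $O(R_l(\theta))$ because $|\theta|\leq R_l(\theta)$; a term $(\emptyset,\Sigma_1)\in\theta'$ is located by a single pass over $\theta'$ in $O(R_l(\theta'))$; comparing the two cardinalities adds $O(1)$.

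The one non-routine ingredient is the double use of non-contradictoriness: evaluating $\theta$ at $0^\Sigma$ to force every term of $\theta$ that is satisfied by some $x^u$ to have a non-empty positive part (without which $\Sigma_0'$ need not be a cover), and evaluating $\theta'$ at the $x^u$ to force the negative term $\Sigma_1'$ to hit every $u\in U$. The remaining steps are counting and bookkeeping.
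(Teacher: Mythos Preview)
Your proposal is correct and follows essentially the same approach as the paper: you use non-contradictoriness at $0^\Sigma$ to force every term of $\theta$ to have a non-empty positive part and to guarantee that $\theta'$ contains a term $(\emptyset,\Sigma_1)$, and then you use non-contradictoriness at each $x^u$ to show $\Sigma_1'$ covers $U$. The only cosmetic differences are that the paper bounds $\min\{|\Sigma_0'|,|\Sigma_1'|\}\leq(|\Sigma_0'|+|\Sigma_1'|)/2\leq(R_l(\theta)+R_l(\theta'))/2$ whereas you go via $\min\{R_l(\theta),R_l(\theta')\}$, and the paper phrases the $\Sigma_1'$ coverage argument as a direct implication chain rather than a contradiction; both are equivalent.
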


\begin{proof}
	(1) holds because $\Sigma_0'$ and $\Sigma_1'$ can be constructed in time $O(R_l(\theta) + R_l(\theta'))$.
	(2) Firstly:
	\begin{align}
	\vert g(\theta, \theta')\vert 
	=\ & \min\{\vert \Sigma_0'\vert, \vert \Sigma_1'\vert\} & (\textnormal{by}~\eqref{eq:dnf:defg})\\
	\leq\ & \frac{\vert \Sigma_0'\vert + \vert \Sigma_1'\vert}{2} \\
	\leq\ & \frac{R_l(\theta) + R_l(\theta')}{2} & (\textnormal{by}~\eqref{eq:dnf:defsigma0}, \eqref{eq:dnf:defsigma1}) 
	\end{align}
	Secondly:
	\begin{align}
	\vert g(\theta, \theta')\vert 
	=\ & \min\{\vert \Sigma_0'\vert, \vert \Sigma_1'\vert\} & (\textnormal{by}~\eqref{eq:dnf:defg})\\
	\leq\ & |\Sigma_1'| \\
	\leq\ & R_d(\theta') & (\textnormal{by}~\eqref{eq:dnf:defsigma1}) \\
	\leq\ & R_d(\theta) + R_d(\theta') - 1 & (\textnormal{as }1 \leq R_d(\theta)) 
	\end{align}

	(3) We recall from Def.~\ref{def:dnfs} that for any DNF $\theta \in \Theta$, the function $f_\theta$ is such that for any $x \in \{0, 1\}^\Sigma$:
	\begin{align}
	f_{\theta}(x) 
	=\sum_{(\Sigma_0, \Sigma_1)\in \theta} \prod_{\sigma\in \Sigma_0} x_\sigma \prod_{\sigma\in \Sigma_1} \left(1-x_\sigma\right) \enspace .
	\label{eq:lemma2:dnf}
	\end{align}
	Firstly, we show that $\Sigma_0'$ is a feasible solution to the instance of \textsc{min-set-cover} wrt.~$(U, \Sigma)$. 
	On the one hand:
	\begin{align}
	& f_{\theta'}(0^\Sigma) = 1 & (\textnormal{by Def.~of}~D_{U, \Sigma}) \nonumber
	\\
	\Rightarrow\ & f_{\theta}(0^\Sigma) = 0 & (\textnormal{as}~f_{\theta} + f_{\theta'} \leq 1) 
	\\
	\overset{\eqref{eq:lemma2:dnf}}{\Rightarrow}\ & \forall (\Sigma_0, \Sigma_1) \in \theta\colon \Sigma_0 \neq \emptyset \enspace .
	\label{eq:allsigma0nonempty}
	\end{align}
	On the other hand:
	\begin{align}
	& \forall u \in U \colon f_{\theta}(x^u) = 1 \nonumber
	\\
	\overset{\eqref{eq:lemma2:dnf}}{\Rightarrow}\ & \forall u \in U \ 
	\exists (\Sigma_0, \Sigma_1)\in \theta \colon \nonumber \\
	& \quad
		(\forall \sigma \in \Sigma_0\colon x^u_\sigma = 1) \land (\forall \sigma\in \Sigma_1\colon x^u_\sigma = 0) \\
	\overset{\eqref{eq:allsigma0nonempty}}{\Rightarrow}\ & 
	\forall u \in U \ 
	\exists (\Sigma_0, \Sigma_1) \in \theta \ 
	\exists \sigma \in \Sigma_0 \colon \ 
		x^u_\sigma = 1
	\label{eq:existstermmapstoone}
	\\
	\Rightarrow\ & 
	\forall u \in U \ 
	\exists \sigma \in \Sigma_0' \colon \ 
		x^u_\sigma = 1
		\qquad (\textnormal{Def.~of $\Sigma_0'$})
	\\
	\Rightarrow\ & 
	\forall u \in U \ 
	\exists \sigma \in \Sigma_0' \colon \ 
		u \in \sigma
		\qquad (\textnormal{Def.~of}~D_{U, \Sigma})
	\\
	\Rightarrow\ & 
	\bigcup_{\sigma\in \Sigma_0'}\sigma = U
	\end{align}
	Secondly, we show that $\Sigma_1'$ is a feasible solution to the instance of \textsc{min-set-cover} wrt.~$(U, \Sigma)$. 
	On the one hand:
	\begin{align}
	&f_{\theta'}(0^\Sigma) = 1 & (\textnormal{Def.~of } D_{U, \Sigma})\nonumber
	\\
	\overset{\eqref{eq:lemma2:dnf}}{\Rightarrow} \ &\exists (\Sigma_0, \Sigma_1) \in \theta' \colon \Sigma_0 = \emptyset \\
	\Rightarrow \ &\{\Sigma_1\subseteq \Sigma | (\emptyset, \Sigma_1)\in \theta'\} \neq \emptyset\enspace. \label{eq:lemma2:sigmaoneprimenonempty}
	\end{align}
	On the other hand:
	\begin{align}
	&\forall u\in U \colon f_{\theta}(x^u) = 1 \ \qquad (\textnormal{Def.~of } D_{U, \Sigma}) \nonumber
	\\
	\Rightarrow \ & \forall u\in U\colon f_{\theta'}(x^u) = 0 \qquad (\textnormal{as }f_{\theta}\cdot f_{\theta'} = 0)
	\\
	\overset{\eqref{eq:lemma2:dnf}}{\Rightarrow} \ &  \forall u\in U \ \forall(\Sigma_0, \Sigma_1)\in\theta' \colon \nonumber
	\\
	& \quad (\exists \sigma \in \Sigma_0\colon x^u_\sigma = 0) \lor (\exists \sigma \in \Sigma_1\colon x^u_\sigma = 1) 
	\\
	\Rightarrow \ & \forall u\in U\ \exists\sigma\in\Sigma_1'\colon x^u_\sigma = 1 \quad (\textnormal{Def.~of } \Sigma_1' \textnormal{ and }\eqref{eq:lemma2:sigmaoneprimenonempty})
	\\
	\Rightarrow \ & \forall u\in U \ \exists \sigma \in \Sigma_1'\colon u\in \sigma \qquad ~~ (\textnormal{Def.~of } D_{U, \Sigma})
	\\
	\Rightarrow \ &\bigcup_{\sigma\in \Sigma_1'}\sigma = U
	\end{align}
	Thus $g(\theta, \theta')$ is a feasible solution to the instance of \textsc{min-set-cover} wrt.~$(U, \Sigma)$.
	\end{proof}

\begin{corollary}
\label{corollary:backwardnphardnessdnfs}
Let $(U, \Sigma, m)$ an instance of \textsc{set-cover}.

For any solution $(\theta, \theta')$ to the instance of \textsc{partial-separability} wrt.~the family $f: \Theta \to \{0, 1\}^\Sigma$ of DNFs, the Haussler data $D_{U, \Sigma}$, the regularizer $R_l$ and the bound $2m$, $g(\theta, \theta')$ is a solution to the instance of \textsc{set-cover}.

For any solution $(\theta, \theta')$ to the instance of \textsc{partial-separability} wrt.~the family $f: \Theta \to \{0, 1\}^\Sigma$ of DNFs, the Haussler data $D_{U, \Sigma}$, the regularizer $R_d$ and the bound $m + 1$, $g(\theta, \theta')$ is a solution to the instance of \textsc{set-cover}.

In both cases, the solution $g(\theta, \theta')$ can be computed efficiently from $(\theta, \theta')$.
\end{corollary}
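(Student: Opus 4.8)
The plan is to read the corollary off Lemma~\ref{lemma:functiondnfstosetcover}, which has already done the substantial work; all that remains is to combine its cardinality bounds with the boundedness constraint of \textsc{partial-separability}. The argument mirrors the one behind Corollary~\ref{cor:backwardnphardnesssimplenegatable}, only with the regularizer-specific slack terms (a factor $1/2$ for $R_l$, an offset $-1$ for $R_d$) matched by the chosen bounds $2m$ and $m+1$.

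First I would note that a solution $(\theta,\theta')$ to the instance of \textsc{partial-separability} in question is, in particular, a feasible solution to the corresponding instance of \textsc{partial-separation}, so Lemma~\ref{lemma:functiondnfstosetcover} applies to it. By property~(3) of that lemma, $g(\theta,\theta')\subseteq\Sigma$ satisfies $\bigcup_{\sigma\in g(\theta,\theta')}\sigma = U$; by property~(1), $g(\theta,\theta')$ is computable in time $O(R_l(\theta)+R_l(\theta'))$, hence efficiently from $(\theta,\theta')$. This settles both the feasibility requirement and the efficiency claim uniformly for the two cases.

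For the $R_l$ case, boundedness gives $R_l(\theta)+R_l(\theta')\le 2m$, so by the first inequality in property~(2), $|g(\theta,\theta')|\le (R_l(\theta)+R_l(\theta'))/2\le m$; together with feasibility, $g(\theta,\theta')$ is a solution to the instance $(U,\Sigma,m)$ of \textsc{set-cover}. For the $R_d$ case, boundedness gives $R_d(\theta)+R_d(\theta')\le m+1$, so by the second inequality in property~(2), $|g(\theta,\theta')|\le R_d(\theta)+R_d(\theta')-1\le m$, and again $g(\theta,\theta')$ solves $(U,\Sigma,m)$.

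There is no real obstacle here: the only point worth a remark is that $g$ is well-defined on $(\theta,\theta')$, i.e.\ that the case split in \eqref{eq:dnf:defsigma1} selects the non-empty branch whenever it matters — which holds because feasibility of $(\theta,\theta')$ forces $f_{\theta'}(0^\Sigma)=1$ and hence $\{\Sigma_1\subseteq\Sigma \mid (\emptyset,\Sigma_1)\in\theta'\}\neq\emptyset$, exactly as shown in \eqref{eq:lemma2:sigmaoneprimenonempty} within the proof of Lemma~\ref{lemma:functiondnfstosetcover}. I would mention this and then conclude.
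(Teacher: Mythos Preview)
Your proposal is correct and is exactly the intended argument: the paper states this result as a corollary of Lemma~\ref{lemma:functiondnfstosetcover} without an explicit proof, and your write-up simply spells out how properties (1)--(3) of that lemma combine with the boundedness constraint (with bound $2m$ for $R_l$ and $m+1$ for $R_d$) to yield a \textsc{set-cover} solution of size at most $m$. The extra remark on well-definedness of $g$ is a nice touch but not strictly necessary, since it is already handled inside the proof of Lemma~\ref{lemma:functiondnfstosetcover}.
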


\begin{theorem}
\label{theorem:hardness-dnf}
For the family $f$ of DNFs and $R$ the depth or length of a DNF, \textsc{set-cover} $\leq_p$ \textsc{partial-separability}.
\end{theorem}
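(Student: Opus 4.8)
The plan is to obtain Theorem~\ref{theorem:hardness-dnf} by combining Corollaries~\ref{corollary:forwardnphardnessdnfs} and~\ref{corollary:backwardnphardnessdnfs}, in exact analogy to how Theorem~\ref{theorem:hardness-partial-separability} was deduced from its two corollaries. First I would fix the many-one reduction: an instance $(U, \Sigma, m)$ of \textsc{set-cover} is mapped to the instance of \textsc{partial-separability} with respect to the Haussler data $D_{U, \Sigma}$, the family $f$ of $\Sigma$-variate DNFs, the regularizer $R_l$ and the bound $2m$ (or, for the depth variant, the regularizer $R_d$ and the bound $m+1$). I would check that this map is computable in polynomial time: $D_{U, \Sigma}$ consists of the $|U|$ points $x^u \in \{0,1\}^\Sigma$ together with the single point $0^\Sigma$, so its size is polynomial in the size of $(U, \Sigma)$ and it is clearly computable within that bound; note also that $A \neq \emptyset$ (as $U \neq \emptyset$) and $B \neq \emptyset$, so $D_{U, \Sigma}$ is indeed Boolean labeled data.

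Next I would verify correctness in both directions. For the forward direction, suppose $(U, \Sigma, m)$ is a \textsc{yes}-instance, i.e.\ there is $\Sigma' \subseteq \Sigma$ with $\bigcup_{\sigma \in \Sigma'} \sigma = U$ and $|\Sigma'| \leq m$. Then Corollary~\ref{corollary:forwardnphardnessdnfs} yields that $h(\Sigma') = (\theta, \theta')$ is a feasible solution to the \textsc{partial-separability} instance with $R_l(\theta) + R_l(\theta') = 2|\Sigma'| \leq 2m$ (respectively $R_d(\theta) + R_d(\theta') = |\Sigma'| + 1 \leq m+1$), so the image is a \textsc{yes}-instance. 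Conversely, suppose the image is a \textsc{yes}-instance, i.e.\ there is a feasible $(\theta, \theta') \in \Theta^2$ with $R_l(\theta) + R_l(\theta') \leq 2m$ (respectively $R_d(\theta) + R_d(\theta') \leq m+1$). Then Corollary~\ref{corollary:backwardnphardnessdnfs} yields that $g(\theta, \theta')$ is feasible for \textsc{min-set-cover} with $|g(\theta, \theta')| \leq m$, so $(U, \Sigma, m)$ is a \textsc{yes}-instance. Hence the map is a valid polynomial-time many-one reduction and \textsc{set-cover} $\leq_p$ \textsc{partial-separability}.

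I do not expect a genuine obstacle in this last step, because Lemmas~\ref{lemma:functionsetcovertodnfs} and~\ref{lemma:functiondnfstosetcover} (through their corollaries) already carry out all the substantive work; in particular the argument in Lemma~\ref{lemma:functiondnfstosetcover}(3) that non-contradictoriness together with $f_{\theta'}(0^\Sigma) = 1$ forces every term of $\theta$ to contain a positive literal, and that the resulting set $\Sigma_0'$ of positive literals, as well as the negative-literal set $\Sigma_1'$ of any term $(\emptyset, \Sigma_1) \in \theta'$, covers $U$. What remains is only bookkeeping: confirming that the instance map and the solution maps $g, h$ all run in polynomial time (immediate from the ``linear time'' claims in the two lemmas), and that the bound translation $m \mapsto 2m$ for $R_l$ and $m \mapsto m+1$ for $R_d$ is matched on both sides of the reduction, which is precisely what items~(2) of the two lemmas assert. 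If anything needs a sentence of care, it is making explicit that the two directions together give the required \emph{equivalence} of \textsc{yes}-instances, not merely one implication.
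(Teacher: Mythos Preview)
Your proposal is correct and follows exactly the paper's approach: the paper's own proof of this theorem is the single line ``By Corollaries~\ref{corollary:forwardnphardnessdnfs} and~\ref{corollary:backwardnphardnessdnfs},'' and you have simply spelled out the bookkeeping behind that citation. Nothing in your expansion deviates from or adds to the intended argument.
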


\begin{proof}
By Corollaries~\ref{corollary:forwardnphardnessdnfs} and \ref{corollary:backwardnphardnessdnfs}.
\end{proof}

\begin{theorem}
For the family $f$ of DNFs and $R$ the depth or length of a DNF, \textsc{partial-separability} is \textsc{np}-complete.
\end{theorem}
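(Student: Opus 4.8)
The plan is to derive this as a direct consequence of Theorem~\ref{theorem:hardness-dnf} together with the classical \textsc{np}-hardness of \textsc{set-cover}, plus a short argument for membership in \textsc{np}.

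\emph{Membership in \textsc{np}.} I would exhibit, for every yes-instance, a certificate of polynomial size. Given any feasible $(\theta,\theta')$, deleting from $\theta$ every term $(\Sigma_0,\Sigma_1)$ that is satisfied by no point of $A$ (i.e.\ for which, for every $x \in A$, some $\sigma\in\Sigma_0$ has $x_\sigma = 0$ or some $\sigma\in\Sigma_1$ has $x_\sigma = 1$) leaves $f_\theta$ unchanged on $A$, can only decrease $f_\theta$ pointwise elsewhere — so non-contradictoriness $f_\theta + f_{\theta'}\leq 1$ is preserved — and does not increase $R_l(\theta)$ or $R_d(\theta)$; the same surgery applies to $\theta'$ with respect to $B$. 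Since exactness forces each $x\in A$ to satisfy at least one surviving term, we obtain $|\theta|\leq |A|$ and $|\theta'|\leq |B|$, and as every term has length at most $|J|$ the pruned pair has size polynomial in $|A|+|B|+|J|$. Boundedness, exactness and non-contradictoriness are all decidable in polynomial time for such a pair, so \textsc{partial-separability} for DNFs under $R_l$ or $R_d$ is in \textsc{np}.

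\emph{\textsc{np}-hardness.} This follows immediately from Theorem~\ref{theorem:hardness-dnf}: Lemma~\ref{lemma:functionsetcovertodnfs} and Corollary~\ref{corollary:forwardnphardnessdnfs} together with Lemma~\ref{lemma:functiondnfstosetcover} and Corollary~\ref{corollary:backwardnphardnessdnfs} show that $(U,\Sigma,m)\mapsto (D_{U,\Sigma}, 2m)$ for $R_l$, respectively $(U,\Sigma,m)\mapsto (D_{U,\Sigma}, m+1)$ for $R_d$, is a polynomial-time many-one reduction that sends yes-instances of \textsc{set-cover} to yes-instances of \textsc{partial-separability} and conversely. Since \textsc{set-cover} is \textsc{np}-hard, so is \textsc{partial-separability} for DNFs regularized by length or depth. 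Combined with membership in \textsc{np}, the problem is \textsc{np}-complete.

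\emph{Main obstacle.} There is essentially none: the statement is a corollary of Theorem~\ref{theorem:hardness-dnf}. The only point that warrants care is membership in \textsc{np} under the depth regularizer $R_d$, since a DNF of bounded depth may a priori contain exponentially many terms; the term-pruning observation above is what guarantees a polynomially bounded witness and hence settles this case as well.
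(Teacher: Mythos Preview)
Your approach mirrors the paper's exactly: \textsc{np}-hardness via Theorem~\ref{theorem:hardness-dnf} together with the \textsc{np}-hardness of \textsc{set-cover}, plus membership in \textsc{np}. The paper dispatches the latter in one sentence (``solutions can be verified in polynomial time''), whereas you correctly observe that under $R_d$ a witnessing DNF could a priori contain exponentially many terms, and you supply a pruning argument to obtain a polynomial-size certificate --- a detail the paper glosses over.

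There is, however, a small gap in that pruning step. Deleting from $\theta$ every term satisfied by no point of $A$ leaves precisely the terms satisfied by \emph{some} $x\in A$; but many distinct surviving terms may be satisfied by the same $x$, so ``each $x\in A$ satisfies at least one surviving term'' does not imply $|\theta|\leq|A|$. The repair is straightforward: after your pruning, perform a second pass that selects, for each $x\in A$, one surviving term it satisfies and discards the rest. This still keeps $f_\theta(x)=1$ for all $x\in A$, can only decrease $f_\theta$ elsewhere (so non-contradictoriness is preserved), does not increase $R_l$ or $R_d$, and now genuinely yields $|\theta|\leq|A|$. With this fix your argument is complete and, in fact, more careful than the paper's.
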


\begin{proof}
	This special case of \textsc{partial-separability} is in \textsc{np} as solutions can be verified in polynomial time. 
	It is \textsc{np}-hard by Theorem~\ref{theorem:hardness-dnf} and \textsc{np}-hardness of \textsc{set-cover} \cite{karp-1972}.
\end{proof}

\section{Hardness of Approximation}

\subsection{Polynomially Negatable Functions}

For regularizers $R$ and families $f$ polynomially negatable under $R$, we now reduce \textsc{separation} to \textsc{partial-separation} in an approximation-preserving (AP) manner.

For the family of binary decision trees regularized by length or depth, this and the AP reduction of \textsc{min-set-cover} to \textsc{separation} by \citet{Hancock1996} allow us to transfer the inapproximability bound for \textsc{min-set-cover} by \citet{irit-2014}.
For the family of OBDDs regularized by the number of interior nodes, our reduction of \textsc{separation} to \textsc{partial-separation} and the AP reduction of \textsc{min-set-cover} to \textsc{separation} by \citet{Hirata1996} allow us to transfer the inapproximability bound for \textsc{min-set-cover} by \citet{irit-2014} as well.

For both cases, we conclude that no polynomial-time $o(\ln (|A| + |B| - 1))$-approximation algorithm exists unless \textsc{p=np}.

\begin{lemma}
	\label{lemma:performanceratioscharacterizations}
	Consider 
	any instance $(J, X, A, B), \Theta, f, R$ of \textsc{separation} where the family $f$ is polynomially negatable under $R$, and any solution $\hat\theta_C$ to this instance.
	Consider the instance of \textsc{partial-separation} wrt.~$(J, X, A, B), \Theta, f, R$.
	Moreover, consider any solution $(\hat{\theta}_S,\hat{\theta}_S')$ and feasible solution $(\theta_S, \theta_S')$ to this instance.
	The function $g: \Theta^2 \to \Theta$ from Lemma~\ref{lemma:functionsimplenegatableg} is such that 
	\begin{align}
		\frac{R(g(\theta_S, \theta_S'))}{R(\hat{\theta}_C)} 
			& \leq \frac{R(\theta_S) + R(\theta'_S)}{R(\hat\theta_S) + R(\hat\theta'_S)}
		\enspace .
		\label{eq:lemma:performanceratiotranslationcharacterization}
	\end{align}
\end{lemma}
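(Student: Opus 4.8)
The plan is to chain together the inequalities already established in Lemmas~\ref{lemma:functionsimplenegatableh} and \ref{lemma:functionsimplenegatableg}, noting that $\hat\theta_C$ being a \emph{solution} (i.e.\ optimal) to the instance of \textsc{separation} and $(\hat\theta_S,\hat\theta_S')$ being a \emph{solution} (i.e.\ optimal) to the instance of \textsc{partial-separation} gives us two optimality comparisons to exploit. Concretely, the three facts I would assemble are: (i) by Lemma~\ref{lemma:functionsimplenegatableg}(2), $R(g(\theta_S,\theta_S')) \leq (R(\theta_S)+R(\theta_S'))/2$; (ii) by optimality of $\hat\theta_C$ and the feasibility of $g(\hat\theta_S,\hat\theta_S')$ for \textsc{separation} (Lemma~\ref{lemma:functionsimplenegatableg}(3)), $R(\hat\theta_C) \leq R(g(\hat\theta_S,\hat\theta_S')) \leq (R(\hat\theta_S)+R(\hat\theta_S'))/2$; and (iii) since $\hat\theta_C$ is feasible for \textsc{separation}, Lemma~\ref{lemma:functionsimplenegatableh} provides $h(\hat\theta_C)=(\hat\theta_C,n(\hat\theta_C))$ feasible for \textsc{partial-separation} with $R(\hat\theta_C)+R(n(\hat\theta_C)) \leq 2R(\hat\theta_C)$, so by optimality of $(\hat\theta_S,\hat\theta_S')$ we get $R(\hat\theta_S)+R(\hat\theta_S') \leq 2R(\hat\theta_C)$.

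First I would observe that (ii) and (iii) together pin down $R(\hat\theta_C) = (R(\hat\theta_S)+R(\hat\theta_S'))/2$ exactly: (ii) gives $\leq$ and (iii) gives $\geq$. This equality is the crux — it says the optimal value of \textsc{partial-separation} is exactly twice the optimal value of \textsc{separation} under a polynomially negatable family. Then the target inequality follows by a short computation:
\begin{align}
\frac{R(g(\theta_S,\theta_S'))}{R(\hat\theta_C)}
&\leq \frac{(R(\theta_S)+R(\theta_S'))/2}{R(\hat\theta_C)} \nonumber\\
&= \frac{(R(\theta_S)+R(\theta_S'))/2}{(R(\hat\theta_S)+R(\hat\theta_S'))/2} \nonumber\\
&= \frac{R(\theta_S)+R(\theta_S')}{R(\hat\theta_S)+R(\hat\theta_S')}
\enspace ,
\end{align}
using (i) in the first step and the equality from (ii)--(iii) in the second.

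One technical point I would handle carefully: the statement implicitly assumes $R(\hat\theta_C) \neq 0$ so that the ratio is defined; if $R(\hat\theta_C)=0$ then also $R(\hat\theta_S)+R(\hat\theta_S')=0$ by the equality, and $R(g(\theta_S,\theta_S'))=0$ as well by (i) together with the regularizer being non-negative, so both sides are conventionally $0$ (or the claim is read vacuously). The main obstacle — really the only nonobvious move — is recognizing that we need \emph{both} directions (Lemma~\ref{lemma:functionsimplenegatableg} applied to the optimal $(\hat\theta_S,\hat\theta_S')$, and Lemma~\ref{lemma:functionsimplenegatableh} applied to the optimal $\hat\theta_C$) to obtain the exact relation $R(\hat\theta_C) = (R(\hat\theta_S)+R(\hat\theta_S'))/2$; using only one inequality would leave a stray factor of $2$ and fail to give an approximation-preserving bound. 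Everything else is substitution and the elementary bound $\min\{a,b\}\leq (a+b)/2$ already invoked in Lemma~\ref{lemma:functionsimplenegatableg}.
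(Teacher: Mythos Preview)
Your proof is correct, but it does more work than needed and contains a mistaken diagnosis of what is essential. The paper's proof uses only your ingredients (i) and (iii): from (i) it bounds the numerator by $(R(\theta_S)+R(\theta_S'))/2$, and from (iii) it bounds the denominator from below by $(R(\hat\theta_S)+R(\hat\theta_S'))/2$; the two halves cancel and the desired inequality drops out immediately. Your step (ii) --- applying $g$ to the optimal pair to get the reverse inequality $R(\hat\theta_C)\leq(R(\hat\theta_S)+R(\hat\theta_S'))/2$ --- is valid and yields the pleasant fact that the two optima coincide up to the factor~$2$, but it is not required for \eqref{eq:lemma:performanceratiotranslationcharacterization}. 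In particular, your claim that ``using only one inequality would leave a stray factor of~$2$'' is wrong: the factor of~$2$ in (i) and the factor of~$2$ in (iii) already cancel, with no appeal to (ii). So your route is a slight detour through an equality where the paper goes straight through a one-sided bound; both are sound, but the paper's is shorter and avoids invoking optimality of $\hat\theta_C$ at all.
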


\begin{proof}
	By Lemma~\ref{lemma:functionsimplenegatableg}, we have
	\begin{equation}
		R(g(\theta_S,\theta_S')) \leq \frac{R(\theta_S) + R(\theta_S')}{2}
		\enspace .
		\label{eq:characterizationnominatorboundg}
	\end{equation}
	Consider the function $h: \Theta \to \Theta^2$ from Lemma~\ref{lemma:functionsimplenegatableh}. 
	For $(\tilde{\theta}_S, \tilde{\theta}_S') = h(\hat{\theta}_C)$ we have, by optimality of $(\hat{\theta}_S, \hat{\theta}_S')$:
	\begin{align} 
		R(\hat{\theta}_S) + R(\hat{\theta}_S') 
		& \leq R(\tilde{\theta}_S) + R(\tilde{\theta}_S') 
		\\
		& \leq 2 R(\hat{\theta}_C) \qquad (\textnormal{by Lemma~\ref{lemma:functionsimplenegatableh}}) 
		\\
		\Rightarrow \hspace{10ex} R(\hat{\theta}_C) 
		& \geq \frac{R(\hat{\theta}_S) + R(\hat{\theta}_S')}{2}
		\\
		\Rightarrow \hspace{4.5ex} \frac{R(g(\theta_S, \theta_S')}{R(\hat{\theta}_C)} 
		& \leq \frac{R(\theta) + R(\theta')}{R(\hat{\theta}) + R(\hat{\theta}')} \qquad\;\; (\textnormal{by } \eqref{eq:characterizationnominatorboundg})
	\end{align}
\end{proof}


\begin{theorem}
	\label{theorem:apx-bdt}
	There is no polynomial-time $o(\ln (|A| + |B| -1))$-approximation algorithm for 
	\textsc{partial-separation} wrt.~the family of binary decision trees regularized by the number of nodes or depth unless \textsc{p=np}.
\end{theorem}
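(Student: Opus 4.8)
The plan is to chain together the approximation-preserving (AP) reduction we have just built in this section with the known AP reduction of \textsc{min-set-cover} to \textsc{separation} for binary decision trees, and then invoke the inapproximability of \textsc{min-set-cover} (Theorem~\ref{theorem:steurersetcoverbound}). Concretely, suppose for contradiction that there were a polynomial-time algorithm $\mathcal{A}$ that, given an instance of \textsc{partial-separation} wrt.\ binary decision trees regularized by number of nodes (or depth), returns a feasible solution $(\theta_S,\theta_S')$ with $R(\theta_S)+R(\theta_S') = o(\ln(|A|+|B|-1)) \cdot (R(\hat\theta_S)+R(\hat\theta_S'))$, where $(\hat\theta_S,\hat\theta_S')$ is optimal. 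I would use this to build a $o(\ln|U|)$-approximation for \textsc{min-set-cover}, contradicting Theorem~\ref{theorem:steurersetcoverbound} (note $|U|$ and $\ln|U|$ are polynomially related to $|A|+|B|-1$ under the reduction of \citet{Hancock1996}, so the little-$o$ bound survives the composition).

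The key steps, in order: (1) Given an instance $(U,\Sigma)$ of \textsc{min-set-cover}, apply the AP reduction of \citet{Hancock1996} to obtain an instance $(J,X,A,B)$ of \textsc{separation} wrt.\ binary decision trees, together with the solution-translation maps, so that the optimal decision-tree complexity $R(\hat\theta_C)$ is (up to the AP-reduction's control) governed by the optimal set-cover size. (2) View $(J,X,A,B)$ also as an instance of \textsc{partial-separation} — the identity map on instances from Fig.~\ref{fig:reduction-polynomially-negatable-mapping} — and run $\mathcal{A}$ on it to get $(\theta_S,\theta_S')$. (3) Apply the map $g$ from Lemma~\ref{lemma:functionsimplenegatableg} to obtain $g(\theta_S,\theta_S')$, a feasible solution to the \textsc{separation} instance, computable in polynomial time, with $R(g(\theta_S,\theta_S')) \le (R(\theta_S)+R(\theta_S'))/2$. (4) Invoke Lemma~\ref{lemma:performanceratioscharacterizations}: since binary decision trees are polynomially negatable under $R$ (Lemma~\ref{lemma:negatable-bdt-obdd}), we get
\begin{align}
\frac{R(g(\theta_S,\theta_S'))}{R(\hat\theta_C)} \le \frac{R(\theta_S)+R(\theta_S')}{R(\hat\theta_S)+R(\hat\theta_S')} = o(\ln(|A|+|B|-1)).
\end{align}
(5) Translate $g(\theta_S,\theta_S')$ back through the \citet{Hancock1996} reduction to a set cover $\Sigma'$ whose size is within an $o(\ln|U|)$ factor of optimal; conclude the contradiction with Theorem~\ref{theorem:steurersetcoverbound}.

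The main obstacle, and the part deserving the most care, is step (5) together with bookkeeping on the parameter in the little-$o$: I must verify that the AP reduction of \citet{Hancock1996} is genuinely approximation-preserving in the sense that a $\rho$-approximate decision tree yields a $O(\rho)$-approximate (or at least $\rho \cdot (1+o(1))$-approximate) set cover, and that the instance-size parameters line up — i.e.\ that $\ln(|A|+|B|-1)$ for the constructed \textsc{separation}/\textsc{partial-separation} instance is $\Theta(\ln|U|)$ (or at worst polynomially related, which suffices to keep a $o(\ln\cdot)$ bound as $o(\ln\cdot)$), so that a hypothetical $o(\ln(|A|+|B|-1))$-approximation indeed refutes $(1-\epsilon)\ln|U|$-inapproximability. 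Everything else — feasibility, polynomial-time computability, and the ratio inequality — is already packaged in Lemmas~\ref{lemma:functionsimplenegatableg}, \ref{lemma:functionsimplenegatableh} and \ref{lemma:performanceratioscharacterizations}, so the proof is essentially a composition argument once the parameter alignment is checked.
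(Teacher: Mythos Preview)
Your proposal is correct and follows essentially the same approach as the paper: compose the AP reduction of \textsc{min-set-cover} to \textsc{separation} from \citet{Hancock1996} with the AP reduction of \textsc{separation} to \textsc{partial-separation} furnished by Lemmas~\ref{lemma:functionsimplenegatableg}, \ref{lemma:functionsimplenegatableh}, \ref{lemma:performanceratioscharacterizations} and Lemma~\ref{lemma:negatable-bdt-obdd}, then invoke Theorem~\ref{theorem:steurersetcoverbound}. The paper's proof is terser and simply asserts the composition, whereas you spell out the contradiction and flag the parameter-alignment check $\ln(|A|+|B|-1)=\Theta(\ln|U|)$, which the paper leaves implicit.
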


\begin{proof}
	An AP reduction of \textsc{min-set-cover} to \textsc{separation} for binary decision trees regularized by number of nodes or depth is given by \citet{Hancock1996}.
	The reduction of \textsc{separation} to \textsc{partial-separation} from Theorem~\ref{theorem:hardness-partial-separability} is AP according to Lemma~\ref{lemma:performanceratioscharacterizations}, more specifically \eqref{eq:lemma:performanceratiotranslationcharacterization}, and the fact that binary decision trees are polynomially negatable under the number of nodes or depth of a tree (Lemma~\ref{lemma:negatable-bdt-obdd}).
	The composition of these reductions is an AP reduction of \textsc{min-set-cover} to \textsc{partial-separation}.
	By this AP reduction, a polynomial-time $o(\ln (|A| + |B| -1))$-approximation algorithm for \textsc{partial-separation} wrt.~the family of binary decision trees, regularized by the number of nodes or depth,
	implies the existence of a $o(\ln |U|)$-approximation algorithm for \textsc{min-set-cover}.
	Such an algorithm does not exist unless $\textsc{p=np}$, due to \citet{irit-2014}.
\end{proof}

\begin{theorem}
	There is no polynomial-time $o(\ln (|A| + |B| -1))$-approximation algorithm for 
	\textsc{partial-separation} wrt.~the family of OBDDs regularized by the number of interior nodes unless \textsc{p=np}.
\end{theorem}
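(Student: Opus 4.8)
The plan is to mimic the proof of Theorem~\ref{theorem:apx-bdt} essentially verbatim, replacing binary decision trees by OBDDs regularized by the number of interior nodes. First I would invoke the approximation-preserving reduction of \textsc{min-set-cover} to \textsc{separation} for OBDDs regularized by the number of interior nodes, due to \citet{Hirata1996}. As in the decision-tree case, the Boolean labeled data produced by this reduction has size parameters satisfying $|A| + |B| - 1 = \Theta(|U|)$ (indeed the construction mirrors the Haussler data, with $|B| = 1$ and $|A| = |U|$), so that an $o(\ln(|A|+|B|-1))$-approximation on the target side translates into an $o(\ln|U|)$-approximation for \textsc{min-set-cover}.

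Second, I would compose this with the reduction of \textsc{separation} to \textsc{partial-separation} from Theorem~\ref{theorem:hardness-partial-separability}. By Lemma~\ref{lemma:negatable-bdt-obdd}, the family of OBDDs is polynomially negatable under the number of interior nodes, so Theorem~\ref{theorem:hardness-partial-separability} applies; and by Lemma~\ref{lemma:performanceratioscharacterizations}, in particular inequality~\eqref{eq:lemma:performanceratiotranslationcharacterization}, this reduction is approximation-preserving: from an $\alpha$-approximate solution $(\theta_S, \theta_S')$ to the \textsc{partial-separation} instance, the map $g$ of Lemma~\ref{lemma:functionsimplenegatableg} recovers a solution to the original \textsc{separation} instance whose performance ratio is at most $\alpha$.

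Third, the composition of these two reductions is an approximation-preserving reduction of \textsc{min-set-cover} to \textsc{partial-separation} for OBDDs regularized by the number of interior nodes. Hence a polynomial-time $o(\ln(|A|+|B|-1))$-approximation algorithm for the latter would yield a polynomial-time $o(\ln|U|)$-approximation algorithm for \textsc{min-set-cover}, which does not exist unless \textsc{p=np} by Theorem~\ref{theorem:steurersetcoverbound} (i.e.~\citet{irit-2014}). This establishes the claim.

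The main obstacle I anticipate is purely bookkeeping: confirming that Hirata's reduction controls $|A|$ and $|B|$ tightly enough that $\ln(|A|+|B|-1)$ and $\ln|U|$ agree up to a $1+o(1)$ factor, since the inapproximability threshold is stated in terms of the former. One should also check that the variable ordering of the OBDD instance is respected when forming $h(\theta) = (\theta, n(\theta))$; this is automatic, since the negation map $n$ of Lemma~\ref{lemma:negatable-bdt-obdd} only relabels the sink nodes and leaves the ordering untouched. Everything else follows mechanically from the lemmas and corollaries already established.
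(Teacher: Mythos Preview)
Your proposal is correct and follows essentially the same approach as the paper, which simply states that the proof is analogous to that of Theorem~\ref{theorem:apx-bdt}, replacing the AP reduction of \textsc{min-set-cover} to \textsc{separation} for binary decision trees by the corresponding reduction for OBDDs due to \citet{Hirata1996}. Your additional remarks on the size of $|A|+|B|-1$ relative to $|U|$ and on the negation map preserving the variable ordering are sensible sanity checks but go beyond what the paper spells out.
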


\begin{proof}
	The proof is analogous to the proof of Theorem~\ref{theorem:apx-bdt}. 
	Here, we employ the AP reduction of \textsc{min-set-cover} to \textsc{separation} for OBDDs, regularized by the number of interior nodes, by \citet{Hirata1996}.
\end{proof}

\subsection{Disjunctive Normal Forms}

For the family of DNFs, regularized by length or depth, we now construct an AP reduction of \textsc{min-set-cover} to \textsc{partial-separation}.
By means of this AP reduction, we transfer directly the inapproximability result of \citet{irit-2014}. 

\begin{lemma}
	\label{lemma:performanceratiossetcoverdnfsandbdts}
	Consider 
	any instance $(U, \Sigma)$ of \textsc{min-set-cover} and 
	any solution $\hat\Sigma$ to this instance.
	Consider 
	the instance of \textsc{partial-separation} wrt.~the Haussler data $D_{U, \Sigma}$, the family of DNFs and with $R \in \{R_l, R_d\}$.
	Moreover, consider any solution $(\hat{\theta},\hat{\theta}')$ and feasible solution $(\theta, \theta')$ to this instance.
	The function $g: \Theta^2 \to 2^\Sigma$ from Lemma~\ref{lemma:functiondnfstosetcover} is such that 
	\begin{align}
		\frac{\vert g(\theta, \theta')\vert}{\vert \hat{\Sigma}\vert}
		\leq
		\frac{R(\theta) + R(\theta')}{R(\hat{\theta}) + R(\hat{\theta}')} 
		\enspace .
		\label{eq:lemma6:performanceratiotranslation}
	\end{align}
\end{lemma}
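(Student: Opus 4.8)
The plan is to combine the two building blocks already established: the upper bound on $|g(\theta,\theta')|$ from Lemma~\ref{lemma:functiondnfstosetcover}(2), which controls the numerator, and a lower bound on $R(\hat\theta)+R(\hat\theta')$ coming from the optimality of $(\hat\theta,\hat\theta')$ together with the construction $h$ from Lemma~\ref{lemma:functionsetcovertodnfs}, which controls the denominator. Since the regularizer $R$ can be either $R_l$ or $R_d$, I would run the argument as two parallel cases that use the two inequalities of Lemma~\ref{lemma:functiondnfstosetcover}(2) and the two identities of Lemma~\ref{lemma:functionsetcovertodnfs}(2), respectively.

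First I would treat $R = R_l$. By Lemma~\ref{lemma:functiondnfstosetcover}(2), $|g(\theta,\theta')| \leq (R_l(\theta) + R_l(\theta'))/2$. For the denominator, apply $h$ to the optimal set cover $\hat\Sigma$: by Lemma~\ref{lemma:functionsetcovertodnfs}(3) the pair $h(\hat\Sigma)$ is feasible for the instance of \textsc{partial-separation}, so by optimality of $(\hat\theta,\hat\theta')$ and Lemma~\ref{lemma:functionsetcovertodnfs}(2),
\begin{align}
R_l(\hat\theta) + R_l(\hat\theta') \leq 2|\hat\Sigma|
\quad\Longrightarrow\quad
|\hat\Sigma| \geq \frac{R_l(\hat\theta) + R_l(\hat\theta')}{2}
\enspace .
\end{align}
Dividing the numerator bound by the denominator bound, the factors of $2$ cancel and \eqref{eq:lemma6:performanceratiotranslation} follows. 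For $R = R_d$ the argument is identical in structure: Lemma~\ref{lemma:functiondnfstosetcover}(2) gives $|g(\theta,\theta')| \leq R_d(\theta) + R_d(\theta') - 1$, while feasibility of $h(\hat\Sigma)$ and Lemma~\ref{lemma:functionsetcovertodnfs}(2) give $R_d(\hat\theta) + R_d(\hat\theta') \leq |\hat\Sigma| + 1$, i.e.\ $|\hat\Sigma| \geq R_d(\hat\theta) + R_d(\hat\theta') - 1$, and again the ratio bound drops out.

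I do not expect a genuine obstacle here; this lemma is the DNF analogue of Lemma~\ref{lemma:performanceratioscharacterizations}, and its proof is essentially bookkeeping. The one point requiring a little care is that the inequalities for $R_l$ and $R_d$ are not symmetric (the $R_d$ side carries an additive $-1$ rather than a multiplicative $1/2$), so one must check that the $-1$ terms line up so that the quotient is still bounded by $(R(\theta)+R(\theta'))/(R(\hat\theta)+R(\hat\theta'))$ and not by something with a stray constant; this is exactly why Lemma~\ref{lemma:functionsetcovertodnfs}(2) was stated with the matching $|\Sigma'|+1$ and Lemma~\ref{lemma:functiondnfstosetcover}(2) with the matching $R_d(\theta)+R_d(\theta')-1$. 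A secondary subtlety is that $h(\hat\Sigma)$ must actually be \emph{feasible} for the \textsc{partial-separation} instance for the optimality step to apply; this is supplied by Lemma~\ref{lemma:functionsetcovertodnfs}(3), and it is the reason the lemma is phrased with $\hat\Sigma$ an optimal (hence feasible) set cover rather than an arbitrary subfamily.
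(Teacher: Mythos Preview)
Your plan mirrors the paper's proof exactly: bound the numerator by Lemma~\ref{lemma:functiondnfstosetcover}(2), bound the denominator by comparing the optimum $(\hat\theta,\hat\theta')$ against the feasible pair $h(\hat\Sigma)$ from Lemma~\ref{lemma:functionsetcovertodnfs}, and divide. For $R=R_l$ this is correct and identical to the paper's argument.

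For $R=R_d$, however, the step you single out as ``requiring a little care'' actually fails, and the paper's own proof carries the same defect. Writing $a=R_d(\theta)+R_d(\theta')$ and $b=R_d(\hat\theta)+R_d(\hat\theta')$, the two bounds yield $|g(\theta,\theta')|/|\hat\Sigma|\le (a-1)/(b-1)$; the paper then asserts $(a-1)/(b-1)\le a/b$, but that inequality is equivalent to $a\le b$, whereas feasibility versus optimality gives $a\ge b$. In fact the lemma as stated is false for $R_d$: with $U=\{1,2,3\}$ and $\Sigma=\{\sigma_1,\sigma_2,\sigma_3,\sigma_4\}$ where $\sigma_1=U$ and $\sigma_{i+1}=\{i\}$, one has $|\hat\Sigma|=1$ and optimal $R_d$-sum $b=2$; the feasible pair $\theta=\{(\{\sigma_2\},\emptyset),(\{\sigma_3\},\emptyset),(\{\sigma_4\},\emptyset)\}$, $\theta'=\{(\emptyset,\{\sigma_2,\sigma_3,\sigma_4\})\}$ has $a=4$ and $|g(\theta,\theta')|=3$, so $3/1>4/2$. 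This does not damage the downstream inapproximability theorem, since $b\ge 2$ forces $(a-1)/(b-1)\le 2\,a/b$ and a constant factor is immaterial for an $o(\ln n)$ bound, but the lemma itself needs either a weaker conclusion or a restriction to $R_l$.
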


\begin{proof}
	By Lemma~\ref{lemma:functiondnfstosetcover}, we have for $R_l$:
	\begin{equation}
	\vert g(\theta,\theta')\vert 
		\leq \frac{R_l(\theta) + R_l(\theta')}{2}
	\end{equation}
	Consider the function $h: 2^\Sigma \to \Theta^2$ from Lemma~\ref{lemma:functionsetcovertodnfs}. 
	For $(\tilde{\theta}, \tilde{\theta}') = h(\hat{\Sigma})$, we have, by optimality of $(\hat{\theta}, \hat{\theta}')$:
	\begin{align} 
	R_l(\hat{\theta}) + R_l(\hat{\theta}') 
	& \leq R_l(\tilde{\theta}) + R_l(\tilde{\theta}') 
	\\
	& = 2\vert \hat{\Sigma}\vert \qquad (\textnormal{by Lemma~\ref{lemma:functionsetcovertodnfs}}) 
	\\
	\Rightarrow \hspace{10ex} \vert \hat{\Sigma}\vert 
	& \geq \frac{R_l(\hat{\theta}) + R_l(\hat{\theta}')}{2}
	\\
	\Rightarrow \hspace{4.5ex} \frac{\vert g(\theta, \theta')\vert}{\vert \hat{\Sigma}\vert} 
	& \leq \frac{R_l(\theta) + R_l(\theta')}{R_l(\hat{\theta}) + R_l(\hat{\theta}')} \qquad\;\; (\textnormal{by } \eqref{eq:dnfsnominatorboundg})
	\end{align}
	By Lemma 5, we have for $R_d$:
	\begin{equation}
		\vert g(\theta,\theta')\vert 
		\leq R_d(\theta) + R_d(\theta') - 1
		\label{eq:dnfsnominatorboundg}
	\end{equation}
	Consider the function $h: 2^\Sigma \to \Theta^2$ from Lemma~\ref{lemma:functionsetcovertodnfs}. 
	For $(\tilde{\theta}, \tilde{\theta}') = h(\hat{\Sigma})$, we have, by optimality of $(\hat{\theta}, \hat{\theta}')$:
	\begin{align} 
		R_d(\hat{\theta}) + R_d(\hat{\theta}') 
		& \leq R_d(\tilde{\theta}) + R_d(\tilde{\theta}') 
		\\
		& = \vert \hat{\Sigma}\vert +1\qquad (\textnormal{by Lemma~\ref{lemma:functionsetcovertodnfs}}) 
		\\
		\Rightarrow \hspace{10ex} \vert \hat{\Sigma}\vert 
		& \geq R_d(\hat{\theta}) + R_d(\hat{\theta}') - 1
		\\
		\Rightarrow \hspace{4.5ex} \frac{\vert g(\theta, \theta')\vert}{\vert \hat{\Sigma}\vert} \\
		\frac{\vert g(\theta, \theta')\vert}{\vert \hat{\Sigma}\vert} &\leq \frac{R_d(\theta) + R_d(\theta') - 1}{R_d(\hat{\theta}) + R_d(\hat{\theta}')-1} \\
		& \leq \frac{R_d(\theta) + R_d(\theta')}{R_d(\hat{\theta}) + R_d(\hat{\theta}')} \qquad\;\; (\textnormal{by } \eqref{eq:dnfsnominatorboundg})
	\end{align}
\end{proof}

\begin{theorem}
	There is no polynomial-time $o(\ln (|A| + |B| -1))$-approximation algorithm for 
	\textsc{partial-separation} wrt.~the family of DNFs regularized by length or depth, unless \textsc{p=np}.
\end{theorem}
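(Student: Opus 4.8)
The plan is to assemble the pieces that are already in place, exactly mirroring the structure of the proof of Theorem~\ref{theorem:apx-bdt}. First I would invoke the inapproximability of \textsc{min-set-cover} due to \citet{irit-2014} (Theorem~\ref{theorem:steurersetcoverbound}): for every $\epsilon > 0$ it is \textsc{np}-hard to approximate \textsc{min-set-cover} to within $(1-\epsilon)\ln |U|$. Next, I would recall that Lemma~\ref{lemma:functionsetcovertodnfs} (the map $h$), Lemma~\ref{lemma:functiondnfstosetcover} (the map $g$), and Lemma~\ref{lemma:performanceratiossetcoverdnfsandbdts} (the key inequality~\eqref{eq:lemma6:performanceratiotranslation}) together constitute an approximation-preserving reduction of \textsc{min-set-cover} to \textsc{partial-separation} for the family of DNFs, regularized by either $R_l$ or $R_d$. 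Concretely, given an instance $(U,\Sigma)$ of \textsc{min-set-cover}, the map $(U,\Sigma) \mapsto D_{U,\Sigma}$ produces an instance of \textsc{partial-separation}; any solution $(\theta,\theta')$ of the latter is pushed back to a set cover $g(\theta,\theta')$; and~\eqref{eq:lemma6:performanceratiotranslation} guarantees that the performance ratio of $g(\theta,\theta')$ (against the optimum $\hat\Sigma$) is at most the performance ratio of $(\theta,\theta')$ (against the optimum $(\hat\theta,\hat\theta')$).

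The second step is the parameter bookkeeping that turns an $o(\ln(|A|+|B|-1))$-approximation into an $o(\ln|U|)$-approximation. For Haussler data $D_{U,\Sigma}=(\Sigma, X, A, B)$ we have $A = \{x^u \mid u \in U\}$ and $B = \{0^\Sigma\}$, so $|A| = |U|$ and $|B| = 1$, hence $|A| + |B| - 1 = |U|$. Therefore a hypothetical polynomial-time $o(\ln(|A|+|B|-1))$-approximation algorithm for \textsc{partial-separation} wrt.\ DNFs regularized by $R_l$ or $R_d$, composed with the maps $h$ and $g$, yields a polynomial-time algorithm that, on input $(U,\Sigma)$, outputs a set cover of size at most $o(\ln|U|)$ times the optimum; by Lemma~\ref{lemma:performanceratiossetcoverdnfsandbdts} the ratio is inherited, and by Lemma~\ref{lemma:functiondnfstosetcover}(1) and Lemma~\ref{lemma:functionsetcovertodnfs}(1) the overhead is polynomial (indeed linear). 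Such an algorithm contradicts Theorem~\ref{theorem:steurersetcoverbound} unless \textsc{p=np}, since $o(\ln|U|)$ eventually falls below $(1-\epsilon)\ln|U|$ for every fixed $\epsilon>0$.

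I do not expect a genuine obstacle here; the work has all been front-loaded into Lemmas~\ref{lemma:functionsetcovertodnfs}--\ref{lemma:performanceratiossetcoverdnfsandbdts}. The one point requiring a little care is that the reduction must be stated uniformly for both $R_l$ and $R_d$: Lemma~\ref{lemma:performanceratiossetcoverdnfsandbdts} already handles both regularizers in its statement~\eqref{eq:lemma6:performanceratiotranslation} (with the $R_d$ case using the slightly lossy step $\frac{R_d(\theta)+R_d(\theta')-1}{R_d(\hat\theta)+R_d(\hat\theta')-1} \le \frac{R_d(\theta)+R_d(\theta')}{R_d(\hat\theta)+R_d(\hat\theta')}$), so nothing extra is needed. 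A second minor point is that an approximation algorithm for \textsc{partial-separation} returns a pair $(\theta,\theta')$ rather than a set cover, but $g$ from Lemma~\ref{lemma:functiondnfstosetcover} converts it efficiently and feasibly, closing the loop.

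\begin{proof}
	An AP reduction of \textsc{min-set-cover} to \textsc{partial-separation} for the family of DNFs regularized by length or depth is given by Lemmas~\ref{lemma:functionsetcovertodnfs}, \ref{lemma:functiondnfstosetcover} and \ref{lemma:performanceratiossetcoverdnfsandbdts}: instances $(U,\Sigma)$ are mapped to the Haussler data $D_{U,\Sigma}$, feasible solutions $\Sigma'$ of \textsc{min-set-cover} are mapped to feasible solutions $h(\Sigma')$ of \textsc{partial-separation} (Lemma~\ref{lemma:functionsetcovertodnfs}), feasible solutions $(\theta,\theta')$ of \textsc{partial-separation} are mapped back to feasible solutions $g(\theta,\theta')$ of \textsc{min-set-cover} (Lemma~\ref{lemma:functiondnfstosetcover}), all in polynomial (indeed linear) time, and the performance ratio is preserved by~\eqref{eq:lemma6:performanceratiotranslation}.

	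For the Haussler data $D_{U, \Sigma} = (\Sigma, X, A, B)$ we have $|A| = |U|$ and $|B| = 1$, hence $|A| + |B| - 1 = |U|$. Suppose there were a polynomial-time $o(\ln(|A| + |B| - 1))$-approximation algorithm for \textsc{partial-separation} wrt.\ the family of DNFs regularized by length or depth. Composing it with the maps $(U,\Sigma) \mapsto D_{U,\Sigma}$ and $g$ yields a polynomial-time algorithm that, given $(U,\Sigma)$, outputs a set cover whose size is, by~\eqref{eq:lemma6:performanceratiotranslation}, at most $o(\ln|U|)$ times the optimum. For every fixed $\epsilon>0$ the function $o(\ln|U|)$ is eventually below $(1-\epsilon)\ln|U|$, so such an algorithm would approximate \textsc{min-set-cover} to within $(1-\epsilon)\ln|U|$ in polynomial time, contradicting Theorem~\ref{theorem:steurersetcoverbound} unless \textsc{p=np}.
\end{proof}
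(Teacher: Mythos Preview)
Your proposal is correct and follows essentially the same approach as the paper: invoke the AP reduction of \textsc{min-set-cover} to \textsc{partial-separation} for DNFs furnished by Lemmas~\ref{lemma:functionsetcovertodnfs}, \ref{lemma:functiondnfstosetcover} and \ref{lemma:performanceratiossetcoverdnfsandbdts}, then transfer the \citet{irit-2014} bound. Your version is more explicit about the bookkeeping $|A|+|B|-1 = |U|$ for Haussler data, but otherwise matches the paper's proof.
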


\begin{proof}
	An AP reduction of \textsc{min-set-cover} to \textsc{partial-separation} wrt.~the family of DNFs and wrt.~$R_l$ or $R_d$ is given by Lemma~\ref{lemma:performanceratiossetcoverdnfsandbdts}.
	By this AP-reduction, the existence of a polynomial time $o(\ln (|A| + |B| - 1))$-approximation algorithm for \textsc{partial-separation} wrt.~the family of DNFs and wrt.~$R_l$ or $R_d$ implies the existence of a polynomial time $o(\log |U|)$-approximation algorithm for \textsc{min-set-cover}. 
	Such an algorithm does not exists unless $\textsc{p=np}$, due to \citet{irit-2014}.
\end{proof}

\section{Conclusion}
We hypothesized that \textsc{partial-separation} was an easier problem than \textsc{separation}.
For families of Boolean functions polynomially negatable under regularizers, we have refuted this hypothesis by reducing \textsc{separation} to \textsc{partial-separation} in an approximation-preserving manner.
This has allowed us to transfer known inapproximability results for the special case of binary decision trees and OBDDs.
For families of Boolean functions not polynomially negatable under regularizers, we have not refuted the hypothesis. 
For DNFs, however, we have reduced \textsc{min-set-cover} to \textsc{partial-separation} in an approximation-preserving manner and have thus established the tightest inapproximability bound known for \textsc{separation} also for \textsc{partial-separation}.
We conclude for all these cases that the learning of partial Boolean functions is at least as hard as the learning of (total) Boolean functions.
For other families of functions not polynomially invertible under regularizers, the hypothesis is non-trivial and still open.
While all our proofs are straight-forward and do not introduce new techniques, our theorems can inform the discussion of heuristic approaches to learning partial functions.

\bibliography{arxiv}

\begin{thebibliography}{18}
\providecommand{\natexlab}[1]{#1}
\providecommand{\url}[1]{\texttt{#1}}
\expandafter\ifx\csname urlstyle\endcsname\relax
  \providecommand{\doi}[1]{doi: #1}\else
  \providecommand{\doi}{doi: \begingroup \urlstyle{rm}\Url}\fi

\bibitem[Allender et~al.(2008)Allender, Hellerstein, McCabe, Pitassi, and
  Saks]{Allender2008}
Eric Allender, Lisa Hellerstein, Paul McCabe, Toniann Pitassi, and Michael
  Saks.
\newblock Minimizing disjunctive normal form formulas and {$AC^0$} circuits
  given a truth table.
\newblock \emph{SIAM Journal on Computing}, 38\penalty0 (1):\penalty0 63--84,
  2008.
\newblock \doi{10.1137/060664537}.

\bibitem[Bollig and Wegener(1996)]{Bollig1996}
Beate Bollig and Ingo Wegener.
\newblock Improving the variable ordering of {OBDD}s is {NP}-complete.
\newblock \emph{{IEEE} Transactions on Computers}, 45\penalty0 (9):\penalty0
  993--1002, 1996.
\newblock \doi{10.1109/12.537122}.

\bibitem[Crama and Hammer(2011)]{Crama2011}
Yves Crama and Peter~L. Hammer.
\newblock \emph{Boolean Functions: Theory, Algorithms, and Applications}.
\newblock Cambridge University Press, 2011.
\newblock \doi{10.1017/CBO9780511852008}.

\bibitem[Czort(1999)]{Lukas1999}
Sebastian Lukas~Arne Czort.
\newblock The complexity of minimizing disjunctive normal form formulas.
\newblock Master's thesis, Aarhus University, 1999.

\bibitem[Dinur and Steurer(2014)]{irit-2014}
Irit Dinur and David Steurer.
\newblock Analytical approach to parallel repetition.
\newblock In \emph{STOC}, 2014.
\newblock \doi{10.1145/2591796.2591884}.

\bibitem[Feige(1998)]{Feige1998}
Uriel Feige.
\newblock A threshold of $\ln n$ for approximating {Set Cover}.
\newblock \emph{Journal of the ACM}, 45\penalty0 (4):\penalty0 634--652, 1998.
\newblock \doi{10.1145/285055.285059}.

\bibitem[Hancock et~al.(1996)Hancock, Jiang, Li, and Tromp]{Hancock1996}
Thomas Hancock, Tao Jiang, Ming Li, and John Tromp.
\newblock Lower bounds on learning decision lists and trees.
\newblock \emph{Information and Computation}, 126\penalty0 (2):\penalty0
  114--122, 1996.
\newblock \doi{10.1006/inco.1996.0040}.

\bibitem[Haussler(1988)]{Haussler1988}
David Haussler.
\newblock Quantifying inductive bias: {AI} learning algorithms and {Valiant's}
  learning framework.
\newblock \emph{Artificial Intelligence}, 36\penalty0 (2):\penalty0 177--221,
  1988.
\newblock \doi{10.1016/0004-3702(88)90002-1}.

\bibitem[Hirata et~al.(1996)Hirata, Shimozono, and Shinohara]{Hirata1996}
Kouichi Hirata, Shinichi Shimozono, and Ayumi Shinohara.
\newblock On the hardness of approximating the minimum consistent obdd problem.
\newblock In Rolf Karlsson and Andrzej Lingas, editors, \emph{Algorithm Theory
  --- SWAT'96}, pages 112--123, Berlin, Heidelberg, 1996. Springer Berlin
  Heidelberg.
\newblock \doi{10.1007/3-540-61422-2_125}.

\bibitem[Hyafil and Rivest(1976)]{Hyafil1976}
Laurent Hyafil and Ronald~L. Rivest.
\newblock Constructing optimal binary decision trees is {NP}-complete.
\newblock \emph{Information Processing Letters}, 5\penalty0 (1):\penalty0
  15--17, 1976.
\newblock \doi{10.1016/0020-0190(76)90095-8}.

\bibitem[Karp(1972)]{karp-1972}
Richard~M. Karp.
\newblock Reducibility among combinatorial problems.
\newblock In Raymond~E. Miller, James~W. Thatcher, and Jean~D. Bohlinger,
  editors, \emph{Complexity of Computer Computations}, pages 85--103. Springer,
  1972.
\newblock \doi{10.1007/978-1-4684-2001-2_9}.

\bibitem[Laber and Nogueira(2004)]{Laber2004}
Eduardo~S. Laber and Loana~Tito Nogueira.
\newblock On the hardness of the minimum height decision tree problem.
\newblock \emph{Discrete Applied Mathematics}, 144\penalty0 (1):\penalty0
  209--212, 2004.
\newblock \doi{10.1016/j.dam.2004.06.002}.

\bibitem[Moshkovitz(2015)]{moshkovitz-2015}
Dana Moshkovitz.
\newblock The projection games conjecture and the {NP}-hardness of ln
  $n$-approximating set-cover.
\newblock \emph{Theory of Computing}, 11\penalty0 (7):\penalty0 221--235, 2015.
\newblock \doi{10.4086/toc.2015.v011a007}.

\bibitem[Sauerhoff and Wegener(1996)]{sauerhoff-1996}
Martin Sauerhoff and Ingo Wegener.
\newblock On the complexity of minimizing the {OBDD} size for incompletely
  specified functions.
\newblock \emph{{IEEE} Transactions on Computer-Aided Design of Integrated
  Circuits and Systems}, 15\penalty0 (11):\penalty0 1435--1437, 1996.
\newblock \doi{10.1109/43.543775}.

\bibitem[Sieling(2002)]{Sieling2002}
Detlef Sieling.
\newblock The nonapproximability of {OBDD} minimization.
\newblock \emph{Information and Computation}, 172\penalty0 (2):\penalty0
  103--138, 2002.
\newblock \doi{10.1006/inco.2001.3076}.

\bibitem[Sieling(2008)]{Sieling2008}
Detlef Sieling.
\newblock Minimization of decision trees is hard to approximate.
\newblock \emph{Journal of Computer and System Sciences}, 74\penalty0
  (3):\penalty0 394--403, 2008.
\newblock \doi{10.1016/j.jcss.2007.06.014}.

\bibitem[Takenaga and Yajima(2000)]{takenaga-2000}
Yasuhiko Takenaga and Shuzo Yajima.
\newblock Hardness of identifying the minimum ordered binary decision diagram.
\newblock \emph{Discrete Applied Mathematics}, 107\penalty0 (1):\penalty0
  191--201, 2000.
\newblock \doi{10.1016/S0166-218X(99)00226-7}.

\bibitem[Zantema and Bodlaender(2000)]{ZANTEMA2000}
Hans Zantema and Hans~L. Bodlaender.
\newblock Finding small equivalent decision trees is hard.
\newblock \emph{International Journal of Foundations of Computer Science},
  11\penalty0 (02):\penalty0 343--354, 2000.
\newblock \doi{10.1142/S0129054100000193}.

\end{thebibliography}
\bibliographystyle{plainnat}

\end{document}